\crefname{app}{Appendix}{Appendices}
\newtheorem{assumption}[theorem]{Assumption}
\theoremstyle{definition}
\newtheorem{notation}[theorem]{Notation}
\newcommand{\no}[1]{#1^{\scriptscriptstyle \bot}} 
\tikzstyle{morphism}=[fill=white, draw=black, shape=rectangle]
\tikzstyle{generic morphism}=[fill=white, draw=black, shape=rectangle, dashed]
\tikzstyle{small box}=[fill=white, draw=black, shape=rectangle, minimum width=0.5cm, minimum height=0.5cm]
\tikzstyle{medium box}=[fill=white, draw=black, shape=rectangle, minimum width=0.5cm, minimum height=0.8cm]
\tikzstyle{large morphism}=[fill=white, draw=black, shape=rectangle, minimum width=0.5cm, minimum height=1.2cm]
\tikzstyle{bn}=[fill=black, draw=black, shape=circle, inner sep=1.5pt]
\tikzstyle{bw}=[fill=white, draw=black, shape=circle, inner sep=1.5pt]
\tikzstyle{bin}=[fill=white, draw=black, shape=circle, inner sep=0pt]
\tikzstyle{effect}=[fill=white, draw=black, regular polygon, regular polygon sides=3, minimum width=0.8cm, inner sep=0pt]
\tikzstyle{state}=[fill=white, draw=black, regular polygon, regular polygon sides=3, minimum width=0.8cm, shape border rotate=90, inner sep=0pt]
\tikzstyle{medium state}=[fill=white, draw=black, regular polygon, regular polygon sides=3, minimum width=1.3cm, inner sep=0pt, shape border rotate=90]
\tikzstyle{large state}=[fill=white, draw=black, regular polygon, regular polygon sides=3, minimum width=2.2cm, shape border rotate=180, inner sep=0pt]
\tikzstyle{wide state}=[fill=white, draw=black, shape=isosceles triangle, minimum width=0.8cm, shape border rotate=270, inner sep=1.4pt, minimum height=0.5cm, isosceles triangle apex angle=80]
\tikzstyle{wn}=[fill=white, draw=black, shape=circle, inner sep=1.5pt]
\tikzstyle{blue morphism}=[fill=white, draw={rgb,255: red,15; green,0; blue,150}, shape=rectangle, text={rgb,255: red,15; green,0; blue,150}, tikzit category=blue]
\tikzstyle{red morphism}=[fill=white, draw={rgb,255: red,150; green,0; blue,2}, shape=rectangle, text={rgb,255: red,150; green,0; blue,2}, tikzit category=red]
\tikzstyle{blue state}=[fill=white, draw={rgb,255: red,15; green,0; blue,150}, shape=circle, regular polygon, regular polygon sides=3, minimum width=0.8cm, shape border rotate=180, inner sep=0pt, text={rgb,255: red,15; green,0; blue,150}, tikzit category=blue]
\tikzstyle{blue node}=[fill={rgb,255: red,15; green,0; blue,150}, draw={rgb,255: red,15; green,0; blue,150}, shape=circle, tikzit category=blue, inner sep=1.5pt]
\tikzstyle{blue}=[text={rgb,255: red,15; green,0; blue,150}, tikzit draw={rgb,255: red,191; green,191; blue,191}, tikzit category=blue, tikzit fill=white, inner sep=0mm]
\tikzstyle{blue wide state}=[fill=white, draw={rgb,255: red,15; green,0; blue,150}, text={rgb,255: red,15; green,0; blue,150}, shape=isosceles triangle, minimum width=0.8cm, shape border rotate=270, inner sep=1.4pt, minimum height=0.5cm, isosceles triangle apex angle=80]
\tikzstyle{red node}=[fill={rgb,255: red,150; green,0; blue,2}, draw={rgb,255: red,150; green,0; blue,2}, shape=circle, inner sep=1.5pt]
\tikzstyle{Purple node}=[fill={rgb,255: red,120; green,0; blue,120}, draw={rgb,255: red,120; green,0; blue,120}, text={rgb,255: red,120; green,0; blue,120}, shape=circle, inner sep=1.5pt]
\tikzstyle{red}=[text={rgb,255: red,150; green,0; blue,2}, inner sep=0mm, tikzit fill=white, tikzit draw={rgb,255: red,191; green,191; blue,191}]
\tikzstyle{purple}=[text={rgb,255: red,150; green,0; blue,150}, inner sep=0mm, tikzit fill=white, tikzit draw={rgb,255: red,191; green,191; blue,191}]
\tikzstyle{white morphism}=[fill=white, draw=white, shape=rectangle, tikzit draw={rgb,255: red,139; green,139; blue,139}]
\tikzstyle{leak morphism}=[fill=white, draw={rgb,255: red,120; green,0; blue,85}, shape=rectangle, text={rgb,255: red,120; green,0; blue,85}, tikzit category=leak]
\tikzstyle{leak}=[text={rgb,255: red,120; green,0; blue,85}, inner sep=0mm, tikzit fill=white, tikzit draw={rgb,255: red,191; green,191; blue,191}, tikzit category=leak]
\tikzstyle{leak node}=[fill={rgb,255: red,120; green,0; blue,85}, draw={rgb,255: red,120; green,0; blue,85}, shape=circle, inner sep=1.5pt, tikzit category=leak]
\tikzstyle{horiz state}=[fill=white, draw=black, regular polygon, regular polygon sides=3, minimum width=1cm, shape border rotate=90, inner sep=0pt]
\tikzstyle{none_90}=[rotate=90]
\tikzstyle{none_-90}=[rotate=-90]
\tikzstyle{arrow}=[->]
\tikzstyle{dashed box}=[-, dashed]
\tikzstyle{blue arrow}=[-, draw={rgb,255: red,15; green,0; blue,150}, tikzit category=blue]
\tikzstyle{red arrow}=[-, draw={rgb,255: red,150; green,0; blue,2}, tikzit category=red]
\tikzstyle{purple arrow}=[->, draw={rgb,255: red,120; green,0; blue,120}, >=stealth, shorten <=2pt, shorten >=2pt]
\tikzstyle{protected purple arrow}=[->, draw={rgb,255: red,120; green,0; blue,120}, >=stealth, shorten <=2pt, shorten >=2pt, preaction={line width=1.8pt, white, draw}]
\tikzstyle{mapsto}=[{|->}]
\tikzstyle{double wire}=[-, double]
\tikzstyle{curly brace}=[-, draw=none, tikzit draw={rgb,255: red,128; green,0; blue,128}]
\tikzstyle{protected}=[-, preaction={line width=1.8pt,white,draw}]
\tikzstyle{leak arrow}=[-, tikzit draw={rgb,255: red,150; green,0; blue,120}]
\tikzstyle{protected leak arrow}=[-, tikzit draw={rgb,255: red,150; green,0; blue,120}]
\tikzstyle{hollow arrow}=[-, very thin, white, preaction={line width=0.7pt,draw={rgb,255: red,120; green,0; blue,85}}, tikzit category=leak, tikzit draw={rgb,255: red,150; green,0; blue,120}]
\tikzstyle{protected hollow arrow}=[-, very thin, white, preaction={line width=0.7pt,draw={rgb,255: red,120; green,0; blue,85},preaction={line width=2.1pt,white,draw}}, tikzit category=leak, tikzit draw={rgb,255: red,150; green,0; blue,120}]
\tikzstyle{over arrow}=[-, black, preaction={draw=white, double}]
\tikzstyle{curly brace}=[-, decorate, decoration={brace,amplitude=5pt}]
\tikzstyle{inv curly brace}=[-, decorate, decoration={brace,amplitude=5pt,mirror}]
\def\cref@thmoptarg[#1]#2#3#4{%
	    \ifhmode\unskip\unskip\par\fi%
	    \normalfont%
	    \trivlist%
	    \let\thmheadnl\relax%
	    \let\thm@swap\@gobble%
	    \thm@notefont{\fontseries\mddefault\upshape}%
	    \thm@headpunct{.}
	    \thm@headsep 5\p@ plus\p@ minus\p@\relax%
	    \thm@space@setup%
	    #2
	    \@topsep \thm@preskip               
	    \@topsepadd \thm@postskip           
	    \def\@tempa{#3}\ifx\@empty\@tempa%
	      \def\@tempa{\@oparg{\@begintheorem{#4}{}}[]}%
	    \else%
	      \refstepcounter[#1]{#3}
	      \@namedef{cref@#3@alias}{#1}
	      \def\@tempa{\@oparg{\@begintheorem{#4}{\csname the#3\endcsname}}[]}%
	    \fi%
	    \@tempa}%
\newcommand{\newterm}[1]{\emph{\textbf{#1}}}
\renewcommand{\emptyset}{\varnothing} 
\newcommand{\longmapsfrom}{\mathrel{\reflectbox{$\longmapsto$}}}
\newcommand{\cat}[1]{{\mathsf{#1}}}
\newcommand{\id}{\mathrm{id}} 		
\newcommand{\tensor}{\otimes}
\newcommand{\comp}{ 		
	\mathchoice{\,}{\,}{}{} 	
}
\DeclareMathOperator{\cop}{copy}
\DeclareMathOperator{\del}{del}
\DeclareMathOperator{\compare}{compare}
\newcommand{\copycomp}[1]{\operatorname{Cpy}(#1)}
\newcommand{\compcomp}[1]{\operatorname{Cmp}(#1)}
\newcommand{\In}[1]{\operatorname{In}_{#1}}
\newcommand{\Out}[1]{\operatorname{Out}_{#1}}
\newcommand{\cC}{\mathsf{C}}		
\newcommand{\graph}[1]{\operatorname{gr}(#1)}
\newcommand{\freehyp}[1]{\mathsf{FreeHyp}(#1)}
\newcommand{\freecdo}[1]{\mathsf{FreeCD}(#1)}
\newcommand{\syn}[1]{\mathsf{HSyn}_{#1}} 
\newcommand{\cdsyn}[1]{\mathsf{CDSyn}_{#1}}
\newcommand{\cartsyn}[1]{\mathsf{FreeC}_{#1}} 
\newcommand{\parents}[1]{\operatorname{Pa}(#1)}
\newcommand{\mor}[1]{\operatorname{Mor}(#1)}
\newcommand{\tr}[1]{\operatorname{Tr}(#1)}
\newcommand{\h}{\mathcal{H}}
\newcommand{\g}{\mathcal{G}}
\newcommand{\finstoch}{\mathsf{FinStoch}}
\newcommand{\odag}{\cat{ODAG}}
\newcommand{\ougr}{\cat{OUGr}}
\newcommand{\cdcat}{\cat{CDCat}}
\newcommand{\hypcat}{\cat{HypCat}}
\newcommand{\bn}{\cat{BN}}
\newcommand{\mn}{\cat{MN}}
\newcommand{\finprojstoch}{\mathsf{FinProjStoch}}
\newcommand{\mat}[1][\mathbb{R}^{\geq 0}]{\mathsf{Mat}(#1)}
\newcommand{\as}[1]{
	\def\relstate{#1}%
	\ifx\relstate\empty
		\text{a.s.}%
	\else
		{#1\text{-a.s.}}%
	\fi
}
\newcommand{\clique}[1]{C\ell(#1)}
\providecommand{\given}{\,|\,}			
\newcommand{\edge}{\mathrel{\rule[0.5ex]{1em}{0.2pt}}}
\title{An Algebraic Approach to Moralisation and Triangulation of Probabilistic Graphical Models {\Large \colorbox{lipicsYellow}{Full Version}}}
\titlerunning{An Algebraic Approach to Moralisation and Triangulation of PGMs, Full Version} 
\author{Antonio {Lorenzin}}{University College London, Computer Science department, UK \and \url{https://sites.google.com/view/antonio-lorenzin} }{a.lorenzin@ucl.ac.uk}{https://orcid.org/0000-0002-2415-4261}{}
\author{Fabio Zanasi}{University College London, Computer Science department, UK \and \url{http://www.zanasi.com/fabio} }{f.zanasi@ucl.ac.uk}{https://orcid.org/0000-0001-6457-1345}{}
\authorrunning{A. Lorenzin and F. Zanasi} 
\keywords{Functorial Semantics, Probabilistic Model, Bayesian Network} 
\begin{document}

\maketitle

\begin{abstract}
	Moralisation and Triangulation are transformations allowing to switch between different ways of factoring a probability distribution into a graphical model. Moralisation allows to view a Bayesian network (a directed model) as a Markov network (an undirected model), whereas triangulation works in the opposite direction. We present a categorical framework where these transformations are modelled as functors between a category of Bayesian networks and one of Markov networks. The two kinds of network (the objects of these categories) are themselves represented as functors, from a `syntax' domain to a `semantics' codomain. Notably, moralisation and triangulation are definable inductively on such syntax, and operate as a form of functor pre-composition. This approach introduces a modular, algebraic perspective in the theory of probabilistic graphical models. \end{abstract}


\section{Introduction}

Increasingly in recent years, category-theoretic semantics has been adopted to identify the algebraic structures underpinning probabilistic computation. These studies have interleaved with the theory of probabilistic programming, Bayesian inference, and machine learning, see e.g.~\cite{Barthe_Katoen_Silva_2020,lorenz2023causalmodels,shiebler2021categorytheorymachinelearning} for an overview. The overarching goal is to establish a principled, mathematically rigorous semantics of probabilistic reasoning, bringing formal clarity to traditional approaches. One notable example is the work \cite{jacobs2019mathematics}, which employs categorical methods to classify different way of reasoning with soft evidence. A key strength of category theory in this context is its emphasis on abstraction: the categorical perspective enables a unified treatment of different probabilistic frameworks---such as discrete, measure-theoretic, and Gaussian models---allowing for generic definitions of conditioning, independence, marginals, and related concepts. A prominent programme in this direction is that of \emph{Markov categories}~\cite{fritz2019synthetic}. Another major advantage of categorical methods is their inherent compositionality. By leveraging categorical structures, we gain insight into how the meaning of probabilistic computation can be systematically decomposed into more fundamental components.

The focus of this work is the categorical semantics of probabilistic graphical models (PGMs). PGMs provide a structured representation of probability distributions, making them useful for a variety of tasks in machine learning, statistics, and artificial intelligence. The `structure' is expressed by a combinatorial object (usually a graph), whose vertices represent events, and edges capture probabilistic dependencies between them. Two of the most prevalent examples of PGMs are Bayesian networks, which are built on directed acyclic graphs, and Markov networks (also called Markov random fields), based on undirected graphs. These models have orthogonal expressivity, and shine in different applications. Therefore, the theory of PGMs comprises procedures transforming a Bayesian network into a Markov network, called \emph{moralisation}, and a Markov network into a Bayesian network, called \emph{triangulation}. Importantly, these transformations do not introduce new conditional independencies between events, but may lose some of those expressed by the original network. Moralisation and triangulation ubiquitously appear in a variety of tasks in Bayesian reasoning, such as the junction tree algorithm, clique tree message passing,  variable elimination, and graph-based optimisation--- see~\cite{koller2009probabilistic,barberBRML2012} for an overview.

In traditional presentations of PGMs, the combinatorial representation of dependencies between variables is deeply entwined with their probabilistic meaning, variously expressed in terms of conditional probability tables (for Bayesian networks), or factors (for Markov networks). This heterogeneity reflects on the way moralisation and triangulation are defined: even though they exclusively affect the combinatorial structure of the network, strictly speaking they need to be applied to the whole network, including the probabilistic component. Moreover, formal methods for reasoning about the combinatorial structure are limited, compared to e.g.\ algebraically/syntactically defined objects, which hampers a systematic study of the mathematical properties of moralisation and triangulation.

In this paper, we develop a categorical semantics of Bayesian networks, Markov networks, and the translating operations of moralisation and triangulation. Our approach draws inspiration from Lavwere's framework of functorial semantics~\cite{lawvere1963functorial}, which understands an algebraic theory $T$ via its freely generated cartesian category of terms $\cartsyn{T}$, and a model of $T$ as product-preserving functors $\cartsyn{T} \to \cat{C}$ to a cartesian category $\cat{C}$. While Lawvere's framework identifies cartesianity (the existence of finite products) as the core categorical structure of algebraic theories, categorical approaches to Bayesian networks, beginning with~\cite{fong2013causaltheoriescategoricalperspective,JacobsZ16,jacobs2019causal_surgery}, have identified in \emph{copy-delete (CD) categories}~\cite{CorradiniG99,fong2013causaltheoriescategoricalperspective,JacobsZ16,Fritz_2023} the basic structure for expressing the connectivity of DAGs. In particular, \cite{jacobs2019causal_surgery} shows a 1-to-1 correspondence between Bayesian networks on a DAG $\g$ and structure-preserving functors from $\cdsyn{\g}$, the free CD-category built from the data of $\g$, to $\finstoch$, the CD-category of stochastic matrices. In other words, Bayesian networks are the models of the `theory' $\g$ in $\finstoch$. Also, morphisms of $\cdsyn{\g}$ are represented as \emph{string diagrams}~\cite{selinger11graphical,piedeleuzanasi}, to emphasise their interpretation as a two-dimensional syntax for graphs.

The functorial semantics perspective on Bayesian network is the starting point of our approach. We will expand it to encompass Markov networks, moralisation, and triangulation. At a conceptual level, our aims are two-fold:
\begin{itemize}
	\item By regarding both Bayesian and Markov networks as models of a freely generated category into a category of stochastic processes, we cleanly separate their `syntax' (the combinatorial structure) from their `semantics' (the probabilistic interpretation). This allows to reason separately on each component, enabling a deeper understanding of the algebraic structures underpinning probabilistic graphical models. 
	\item Moralisation and triangulation become examples of this paradigm. We are able to interpret them as transformations acting on the syntax of the network, defined simply by functor pre-composition. Furthermore, being functors from a freely generated category, they can be defined inductively on the syntax generators. In this way, we have reduced the traditional \emph{combinatorial} perspective on networks to a purely \emph{syntactic} perspective, which is more amenable to modularisation and algebraic reasoning.
\end{itemize}
We now outline our contributions, also indicating how they appear in the paper structure.
\begin{itemize}
	\item After recalling copy-delete and hypergraph categories (\cref{sec:cd-hypergraphcats}), in \cref{sec:networks} we characterise Bayesian and Markov networks as functors. For Markov networks, we give a functorial semantics characterisation analogous to the one of Bayesian networks described above. The key difference is that, rather than CD-categories, we need the richer structure of hypergraph categories~\cite{CARBONI198711,fong2019hypergraph,marsden_et_al:17} in order to account for the lack of directionality of Markov networks. We also provide a functorial characterisation of `irredundant' networks, this time novel both for the Bayesian and Markov case. Intuitively, irredundant networks provide the correct level of granularity if the focus is on conditional independencies, which is the case when studying moralisation and triangulation.
	\item \Cref{sec:categoriesnetworks} introduces a suitable notion of morphism for (irredundant) networks, culminating in the definition of categories $\bn$ and $\mn$ of Bayesian and Markov networks respectively.
	\item \Cref{sec:moralisation} characterises moralisation as a functor $\bn \to \mn$ and triangulation as a functor $\mn \to \bn$. Importantly, these functors have a high-level description, by pre-composition with the functors that define Bayesian and Markov networks. Moreover, as mentioned they are definable inductively on the string diagrammatic syntax representing the graph structure. These clauses, displayed as~\eqref{eq:moralisation_functor}-\eqref{eq:triangulation} below, encapsulate the `essence' of moralisation and triangulation, crystallising it into a syntactic presentation.  
\end{itemize}
Our work is just the beginning of a functorial framework for PGMs: we outline some research directions in \cref{sec:conclusions}. Missing proofs and details may be found in the appendices.

\section{Copy-Delete and Hypergraph Categories}\label{sec:cd-hypergraphcats}

The fundamental distinction between Bayesian and Markov networks is that one is a \emph{directed} and the other is an \emph{undirected} model. In this section we recall the categorical structures necessary to account for this difference. First, we focus on the directed case. Following the usual categorical perspective on Bayesian networks~\cite{fong2013causaltheoriescategoricalperspective,JacobsZ16}, we identify in \emph{copy-discard (CD) categories} the requirements needed to interpret the directed acyclic structures of these models. Intuitively, in CD-categories each object has a `copy' and a `delete' map, expressing the ability of nodes of being connected to multiple edges, or to no edge at all. We assume familiarity with \emph{string diagrams}~\cite{selinger11graphical,piedeleuzanasi}, the graphical language of monoidal categories, which we adopt to emphasise the interpretation of morphisms as graphical models.

\begin{assumption}[Strictness]\label{assump:strict}
	Throughout, we always mean monoidal categories and functors to be \newterm{strict}: associators, unitors and coherence morphisms for the functors are all identities.
\end{assumption}

\begin{definition}\label{def:cdcat}
	A \newterm{CD-category} is a symmetric monoidal category where each object $X$ is equipped with a commutative comonoid respecting the monoidal structure. We write `copy' (comultiplication) and `delete' (counit) maps in string diagram notation, respectively as $\minitikzfig{copyX}$ and $\minitikzfig{delX}$. We omit the object label when unnecessary. The commutative comonoid equations are then displayed as:
		\begin{equation}\label{eq:comonoids}
			\scalebox{0.7}{\tikzfig{copy_commutative}}\qquad \scalebox{0.7}{\tikzfig{copy_associative}}\qquad \scalebox{0.7}{\tikzfig{del_unit}}
		\end{equation}
	Associativity ensures a well-defined `copy' $\minitikzfig{copymult}$ with multiple outputs.
	A \newterm{CD-functor} is a symmetric monoidal functor between CD-categories preserving copy and delete maps. CD-categories and CD-functors form a category $\cdcat$.
\end{definition}

\begin{example}\label{ex:finstoch}
	Our chief example of CD-category is $\finstoch$~\cite[Example 2.5]{fritz2019synthetic}, the category whose objects are finite sets\footnote{To ensure strictness of $\finstoch$, we actually take as objects only finite sets whose elements are finite lists, and define the monoidal product $X \otimes Y$ via list concatenation. For example, if $X=\lbrace [a],[b] \rbrace$ and $Y=\lbrace [a],[c] \rbrace$, then $X \otimes Y =\lbrace [a,a], [a,c], [b,a],[b,c] \rbrace$. This caveat is immaterial for our developments.
} and whose morphisms $f\colon X \to Y$ are maps $Y \times X \to \mathbb{R}^{\ge 0}$ such that $\Sigma_{y \in Y} f(y,x) = 1$. We will use a ``conditional notation'' and write $f(y\given x)$ for the image of $(y,x)$ via $f$. 
Note that the requirement on $f$ amounts to saying $f(-\given x)$ is a probability distribution on $Y$ for each $x \in X$. Another way to view $f$ is as a stochastic matrix (= a matrix where each column sums to $1$) with $X$-labelled columns and $Y$-labelled rows. Composition is defined via the Chapman--Kolmogorov equation, or equivalently by product of matrices: given $f \colon X \to Y$ and $g \colon Y \to Z$, $
				g\comp f (z \given x) := \sum_{y \in Y} g(z\given y) f(y \given x)$.
				The tensor product is the Kronecker product of matrices; more explicitly, for $f\colon X \to Y$ and $h \colon Z \to W$, $f \otimes h (y,w \given x,z) := f(y\given x) h(w \given z)$.
				The structural morphisms yielding the CD structure are defined as follows:
				\[
				\begin{array}{ccc}
					\minitikzfig{copy} (y,z\given x) \coloneqq \begin{cases}
						1 & \text{if }x=y=z\\
						0 & \text{otherwise}
					\end{cases},&\text{and}&
				\minitikzfig{del} (\given x) \coloneqq 1.
				\end{array}
				\]
				This definition motivates the names `copy' and `delete' for the comonoid operations.
\end{example}

\begin{example}[Free CD-categories]\label{ex:freecdo}
Recall that a \emph{monoidal signature} is a pair $(A, \Sigma)$ consisting of a set $A$ of generating objects and a set $\Sigma$ of generating morphisms typed in $A^{\star}$ (finite lists of $A$-elements). One may freely construct the CD-category associated with $(A,\Sigma)$, denoted $\freecdo{A,\Sigma}$: its set of objects is $A^{\star}$ and morphisms are obtained by combining $\Sigma$-generators with $\minitikzfig{copyX}$ and $\minitikzfig{delX}$, for each $X \in A$, via sequential and parallel composition, and then quotienting by~\eqref{eq:comonoids} and the laws of symmetric strict monoidal categories. We refer e.g.\ to \cite{baez2018props,zanasi2018,bonchi2018deconstructing} for details.
\end{example}


We now turn attention to undirected models. We argue the suitable categorical structure is an extension of CD-categories, variously called \emph{hypergraph categories}~\cite{fong2019hypergraph} or \emph{well-supported compact closed categories}~\cite{CARBONI198711}.


\begin{definition}\label{def:hypcat}
	A \newterm{hypergraph category} is a CD-category where furthermore each object $X$ is equipped with a commutative monoid respecting the monoidal structure, and interacting with the comonoid on $X$ via the laws of special Frobenius algebras. We write `compare' (multiplication) and `omni' (unit) maps in string diagram notation, respectively as $\minitikzfig{compareX}$ and $\minitikzfig{omniX}$. The commutative monoid equations and special Frobenius equations are displayed as:
\begin{equation}\label{eq:hypergraph}
    \begin{aligned}
			\scalebox{0.7}{\tikzfig{compare_commutativity}}\qquad \scalebox{0.7}{\tikzfig{compare_associativity}}\qquad \scalebox{0.7}{\tikzfig{omni_unit}} \\[1em]
						\scalebox{0.7}{\tikzfig{frobenius_special}}\qquad \scalebox{0.7}{\tikzfig{frobenius_snake}} \qquad\qquad \qquad
   \end{aligned}
\end{equation}
Associativity ensures a well-defined `compare' $\minitikzfig{comparemult}$ with multiple inputs.
 A \newterm{hypergraph functor} is a CD-functor preserving the monoid structure. Hypergraph categories and hypergraph functors form a category $\hypcat$.
\end{definition}

\begin{notation}
	We sometimes use the `cups' and `caps' notation: $\minitikzfig{cup}\coloneqq \minitikzfig{cup_def}$ and $\minitikzfig{cap}\coloneqq \minitikzfig{cap_def}$.
\end{notation}
	
The suitability of hypergraph categories to express undirected models may be better appreciated by observing that there is a bijective correspondence between homsets $[X,Y]$, $[I, X \otimes Y]$, and $[X \otimes Y,I]$ for any objects $X, Y$. These correspondences, obtained by `cups' and `caps', may be seen graphically as `bending wires', or `turning inputs into outputs' and vice versa: \minitikzfig{fXY}\, \minitikzfig{fcapXY} \, \minitikzfig{fcupXY}.
We refer to~\cite{fong2019hypergraph} for a more systematic discussion. 
Additionally, hypergraph categories allow for the decomposition of `cups' and `caps' into the more elementary `copy', `delete', `compare', and `omni' maps.
The expressivity provided by these maps is actually crucial for modelling Bayesian and Markov networks, as all occurring variables are explicitly represented, ensuring that no information is hidden (cf.\ \cref{fig:ex_bayesian_network,fig:markov_network_ex}; see also \cite[Section 3]{jacobs2019causal_surgery}).
	
	\begin{example} \label{ex:main}
Our leading example of hypergraph category is $\mat$, the category whose objects are finite sets\footnote{The same caveat as in the previous footnote applies.} and whose morphisms $f\colon X \to Y$ are maps $Y \times X \to \mathbb{R}^{\ge 0}$. Sequential and parallel composition are defined as in $\finstoch$ (\Cref{ex:finstoch}), and we may use analogous conditional notation and matrix representation for its morphisms. In fact, we may regard $\mat$-morphisms as (generic) matrices with entries in $\mathbb{R}^{\ge 0}$. $\finstoch$ is the full subcategory where we restrict to the stochastic matrices. 
The compare-omni morphisms are defined as follows:
				\[
				\begin{array}{ccc}
					\minitikzfig{compare} (z\given x,y) \coloneqq \begin{cases}
						1 & \text{if }x=y=z\\
						0 & \text{otherwise}
					\end{cases}, &\qquad& \minitikzfig{omni} (x \given ) \coloneqq 1, 
				\end{array}
				\]
			Note that $\minitikzfig{del_omni} = \id_I$ is not necessarily satisfied.
	\end{example}

	\begin{example}\label{ex:finprojstoch}
		The category $\finprojstoch$, as defined in \cite[Definition 6.3]{stein2024exactconditions}, is a quotient of $\mat$ given by setting an equivalence relation $f_1 \propto f_2$ whenever there exists $\lambda\in \mathbb{R}^{> 0}$ such that $\lambda f_1(y\given x) = f_2(y\given x)$ for all $x \in X$ and $y \in Y$. Intuitively, $f_1$ and $f_2$ only differ by a global nonzero multiplicative factor.
		$\finprojstoch$ inherits the hypergraph category structure of $\mat$ via the functor $\mat \to \finprojstoch$ sending $f$ to its equivalence class $[f]_{\propto}$. 
		Also, $\finstoch$ embeds in $\finprojstoch$ via the analogously defined functor $f\mapsto [f]_{\propto}$.
	\end{example}
	
\begin{remark}[The normalisation cospan]\label{rem:normalisation_cospan}
	The introduction of $\finprojstoch$ is justified by the cospan 
	$\mat\xrightarrow{q} \finprojstoch \xleftarrow{i} \finstoch$, 
	where $q$ is the quotient and $i$ is the embedding described in \Cref{ex:finprojstoch}. This picture is central to our characterisation of Markov networks, as it describes a normalisation procedure (see~\cref{prop:irredundantmarkov_functor}). 
\end{remark}
	
	\begin{example}[Free hypergraph categories]\label{ex:freecor} Analogously to the case of CD-categories (\Cref{ex:freecdo}), one may freely construct a hypergraph category from a monoidal signature $(A,\Sigma)$: the only difference is that the construction of morphisms also involves `structural' generators $\minitikzfig{compareX}$ and $\minitikzfig{omniX}$ for each $X \in A$, and we additionally quotient by~\eqref{eq:hypergraph}.
	\end{example}

\section{Probabilistic Graphical Models as Functors}~\label{sec:networks}

In this section we characterise both Bayesian networks and Markov networks via the paradigm of functorial semantics. For Bayesian networks, we build on a previous result~\cite[Prop.~3.1]{jacobs2019causal_surgery}, whereas the characterisation for Markov networks is novel. These results also help us to characterise an `irredundant' version of networks, which is instrumental for the developments of next sections.

Throughout, all graphs are \newterm{finite}. Also, in a {DAG} (directed acyclic graph) $\g=(V_{\g},E_{\g})$, we denote by $\parents{v}$ the set of parents of a vertex $v$, i.e.~the set of vertices $w$ such that $w \to v$.

\begin{definition}\label{def:BN} A \newterm{Bayesian network} over $\g$ is given by an assignment $\tau(v)$ of a finite set for any vertex $v \in V_{\g}$ together with a stochastic matrix $\phi_v \colon \tau(v)\times \prod_{w \in\parents{v}} \tau(w)\to [0,1]$ interpreted as a conditional distribution of $v$ knowing $\parents{v}$.
Any such Bayesian network has an associated distribution given by $P(V_{\g})= \prod_{v \in V_{\g}} f_v (v \given \parents{v})$.
\end{definition}

\begin{example}\label{ex:BEAR}
The leftmost picture in \cref{fig:ex_bayesian_network} depicts a Bayesian network, borrowed from \cite[Ex. 9.14]{jacobs2021logical}. 
	Vertices represent an alarm ($A=\lbrace a,\no{a}\rbrace$), which may be activated by a burglary ($B=\lbrace b,\no{b}\rbrace$) or an earthquake ($E=\lbrace e, \no{e}\rbrace$). The radio ($R=\lbrace r,\no{r}\rbrace$) reliably reports any earthquake. The two elements of each set represent if the event occurred (e.g., $a$), or not (e.g., $\no{a}$). Edges indicate causal relationship between events, with probabilities given by stochastic matrices (represented as conditional probability tables).
	\begin{figure}[tp]
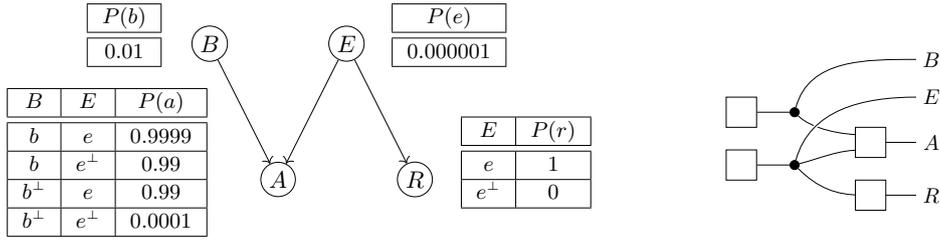

	\centering
	\begin{subfigure}[b]{0.6\textwidth}
		\centering
	\scalebox{0.9}{\tikzfig{BEAR}}
	\end{subfigure}
	\hspace*{0cm}
	\begin{subfigure}[b]{0.37\textwidth}
		\centering
	\scalebox{0.8}{\tikzfig{BEAR_stringdiagram}}
	\end{subfigure}
	\caption{A Bayesian network and the string diagram in $\finstoch$ representing it.}\label{fig:ex_bayesian_network}
	\end{figure}
\end{example}

\begin{remark}[On ordering]\label{rem:ordering}
	The stochastic matrix $f_v$ as in \Cref{def:BN} can be described by a morphism $\tau(v)\to \prod_{w \in\parents{v}} \tau(w)$ in $\finstoch$.
	However, this assignment is only unique up to permutation. It becomes unique once we choose a specific order for $\prod_{w \in\parents{v}} \tau(w)$. Thus for simplicity we work with (totally) \newterm{ordered graphs}. Recall that a DAG $\g$ is ordered if equipped with a topological ordering, i.e.\ a total order such that $v\to w$ implies $v <w$. 
\end{remark}
The idea behind the functorial perspective is to cleanly separate `syntax' (the graph) and `semantics' (the probability tables) of a Bayesian network. Viewing the graph as syntax is made possible by the free construction of \Cref{ex:freecdo}. Given an ordered DAG $\g=(V_{\g},E_{\g})$, we define $\cdsyn{\g}$ as the free CD-category given by the signature $(V_\mathcal{G}, \Sigma_{\mathcal{G}})$, where $\Sigma_{\mathcal{G}} \coloneqq \left\lbrace{\minitikzfig{parents_morph} \given v \in V_{\mathcal{G}}}\right\rbrace$ and $\parents{v}$ indicates the parents of $v$. As stated in the following proposition, we can identify Bayesian networks based on $\g$ with models of $\g$ in $\finstoch$.

\begin{proposition}[{\cite[Proposition 3.1]{jacobs2019causal_surgery}}]\label{prop:bayesian_functor}
	Let $\g$ be an ordered DAG. 
	Bayesian networks over $\g$ are in bijective correspondence with CD-functors $\cdsyn{\g}\to \finstoch$.
\end{proposition}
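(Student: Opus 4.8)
The plan is to read the claimed bijection off the universal property of the free CD-category construction of \cref{ex:freecdo}. Recall $\cdsyn{\g}=\freecdo{V_{\g},\Sigma_{\g}}$, where the generating objects are the vertices $V_{\g}$ and $\Sigma_{\g}$ contains, for each $v\in V_{\g}$, a single generating morphism $g_v$ with input wires labelled by the parents $\parents{v}$ (listed in the topological order of $\g$) and one output wire labelled $v$. By the universal property of this free construction, a CD-functor $F\colon\cdsyn{\g}\to\cat{C}$ into an arbitrary CD-category $\cat{C}$ is determined, freely, by the following data: an object $F(v)\in\cat{C}$ for each $v\in V_{\g}$, extended to lists via $\otimes$; and, for each $v$, a morphism $F(g_v)\colon\prod_{w\in\parents{v}}F(w)\to F(v)$ in $\cat{C}$. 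Since $F$ is moreover a \emph{CD}-functor, its action on the structural maps $\cop_X$ and $\del_X$ is forced to be the copy and delete maps of $\cat{C}$, so there is no additional data and no further freedom.

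First I would specialise to $\cat{C}=\finstoch$. Unwinding \cref{ex:finstoch}: an object $F(v)$ is a finite set, which I rename $\tau(v)$; and a $\finstoch$-morphism $F(g_v)\colon\prod_{w\in\parents{v}}\tau(w)\to\tau(v)$ is, by definition, a map $\tau(v)\times\prod_{w\in\parents{v}}\tau(w)\to\mathbb{R}^{\ge 0}$ whose every column sums to $1$, i.e.\ precisely a conditional probability table of $v$ given $\parents{v}$ in the sense of \cref{def:BN} (when $\parents{v}=\emptyset$ the domain is the unit $I$, so this is just a probability distribution on $\tau(v)$). Thus the free data of a CD-functor $\cdsyn{\g}\to\finstoch$ coincides verbatim with the data of a Bayesian network over $\g$. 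The correspondence sends a Bayesian network $(\tau,(\phi_v)_{v\in V_{\g}})$ to the unique CD-functor $F$ with $F(v)=\tau(v)$ and $F(g_v)=\phi_v$, and a CD-functor $F$ to the Bayesian network $\bigl(v\mapsto F(v),\ (\phi_v\coloneqq F(g_v))_{v}\bigr)$; these are mutually inverse because the universal property is itself a bijection between CD-functors and signature morphisms, and signature morphisms into $\finstoch$ are literally the tuples of \cref{def:BN}.

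As a consistency check, not needed for the bijection but worth recording, the joint distribution $P(V_{\g})=\prod_{v\in V_{\g}}\phi_v(v\given\parents{v})$ attached to a Bayesian network is recovered by applying $F$ to the canonical string diagram that feeds a copy of each vertex into every generator having it as a parent (as in \cref{fig:ex_bayesian_network}): since composition in $\finstoch$ is the Chapman--Kolmogorov equation, this interpretation evaluates to precisely that product, so the functorial picture carries the intended semantics.

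I expect the only real work to be the type-bookkeeping, and that is the step I would be most careful about. One must confirm that a $\finstoch$-morphism of the exact type dictated by $g_v$ is the same thing as a conditional probability table as in \cref{def:BN}, using the matrix description of $\finstoch$ and its column-stochasticity condition; and one must use that $\g$ is \emph{ordered} (\cref{rem:ordering}) so that $\prod_{w\in\parents{v}}\tau(w)$ is a well-defined object, not merely one defined up to permutation --- this is what makes the correspondence a strict bijection rather than an equivalence up to isomorphism. There is no genuinely hard step otherwise: CD-functoriality leaves no choice on the comonoid structure, so nothing needs to be checked there, and the universal property of $\freecdo{V_{\g},\Sigma_{\g}}$ is taken as given via \cref{ex:freecdo}.
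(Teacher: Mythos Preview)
Your argument is correct and is exactly the intended one. The paper does not actually supply a proof of \cref{prop:bayesian_functor}: it cites \cite[Proposition~3.1]{jacobs2019causal_surgery} and moves on. However, the paper does prove the analogous statement for Markov networks (\cref{prop:markov_functor}) in the appendix, and that proof is precisely your argument specialised to the hypergraph setting: define the functor on generators by $F(v)\coloneqq\tau(v)$ and $F(g_v)\coloneqq\phi_v$, observe the inverse construction, done. Your extra remarks---that CD-functoriality forces the action on the comonoid structure, that the ordering of $\g$ is what makes the domain $\prod_{w\in\parents{v}}\tau(w)$ well-defined rather than defined only up to permutation, and the consistency check recovering the joint distribution---are all accurate and go slightly beyond what the paper spells out, but they are the right things to say.
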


A useful observation for our developments, not appearing in~\cite{jacobs2019causal_surgery}, is that $\cdsyn{\g}$ itself may be viewed as a functorial construction. Let $\odag$ be the category whose objects are ordered DAGs, and morphisms are order-preserving graph homomorphisms. Also, recall the category $\cdcat$ of CD-categories from \Cref{def:cdcat}.

\begin{theorem}\label{thm:graphhom-cd}
	There is a contravariant functor $\cdsyn{}\colon \odag  \to \cdcat$ mapping a DAG $\g$ to the CD-category $\cdsyn{\g}$. 
\end{theorem}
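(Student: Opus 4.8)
The plan is to construct $\cdsyn{}$ explicitly on objects and morphisms of $\odag$ and then verify functoriality, using the universal property of the free CD-category construction (\Cref{ex:freecdo}) to do all the heavy lifting. On objects, $\cdsyn{}$ sends an ordered DAG $\g$ to the free CD-category $\cdsyn{\g} = \freecdo{V_\g, \Sigma_\g}$ already defined before \Cref{prop:bayesian_functor}. On morphisms, given an order-preserving graph homomorphism $h \colon \g \to \h$, I would define a CD-functor $\cdsyn{h} \colon \cdsyn{\h} \to \cdsyn{\g}$ (note the reversal) by specifying where it sends the generators of $\cdsyn{\h}$ and invoking freeness. On generating objects it acts as $h$ itself on vertices: $v \mapsto h(v)$, extended to lists $V_\h^\star \to V_\g^\star$ by concatenation. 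On a generating morphism $\minitikzfig{parents_morph}$ of $\cdsyn{\h}$ associated with a vertex $v \in V_\h$, the target must be a morphism $\cdsyn{h}(v) \to \prod_{w \in \parents{v}} \cdsyn{h}(w)$, i.e. $h(v) \to \prod_{w \in \parents[\h]{v}} h(w)$ in $\cdsyn{\g}$.

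The key point making this well-defined is that a graph homomorphism preserves edges, so $w \to v$ in $\h$ implies $h(w) \to h(v)$ in $\g$, hence $\{h(w) : w \in \parents[\h]{v}\} \subseteq \parents[\g]{h(v)}$. Therefore the generator $\minitikzfig{parents_morph}$ of $\cdsyn{\g}$ associated with $h(v)$, which has type $h(v) \to \prod_{u \in \parents[\g]{h(v)}} u$, can be post-composed with a map $\prod_{u \in \parents[\g]{h(v)}} u \to \prod_{w \in \parents[\h]{v}} h(w)$ built solely from the comonoid structure (copying the coordinates that are hit, possibly multiply, and deleting the coordinates of $\parents[\g]{h(v)}$ not in the image, and permuting) — here I use that $\cdsyn{\g}$ is a CD-category so $\minitikzfig{copyX}$ and $\minitikzfig{delX}$ are available, and that order-preservation of $h$ lets me fix a canonical such structural map (so no arbitrary choices). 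This assignment on generators extends uniquely to a CD-functor $\cdsyn{h}$ by the universal property in \Cref{ex:freecdo}.

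It then remains to check the two functoriality equations: $\cdsyn{\id_\g} = \id_{\cdsyn{\g}}$ and $\cdsyn{k \circ h} = \cdsyn{h} \circ \cdsyn{k}$ for composable $\g \xrightarrow{h} \h \xrightarrow{k} \mathcal{K}$. In each case, because both sides are CD-functors out of a free CD-category, it suffices to check they agree on generators, which reduces to a short computation: on objects it is just functoriality of $h \mapsto$ (its action on vertices), and on the morphism generators it follows from the fact that the chosen structural comonoid maps compose correctly — this is where the canonical (order-respecting) choice of structural map pays off, turning what could be a coherence headache into a routine check using associativity and commutativity of copy together with the counit laws \eqref{eq:comonoids}. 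The contravariance is automatic from the construction since $\cdsyn{h}$ reverses the direction of $h$.

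The main obstacle I anticipate is bookkeeping around the structural maps $\prod_{u \in \parents[\g]{h(v)}} u \to \prod_{w \in \parents[\h]{v}} h(w)$: one must pin down a genuinely canonical choice (using the total orders on $V_\g$ and $V_\h$ and the fact that $h$ is order-preserving) so that composites of such maps along $k \circ h$ match on the nose, rather than merely up to the comonoid equations — otherwise functoriality would only hold up to isomorphism. Everything else is a direct application of the universal property of $\freecdo{-}$.
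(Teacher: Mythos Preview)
Your construction has a basic type error that points to a deeper conceptual mismatch with the theorem. You declare $h \colon \g \to \h$ and then, in order to define $\cdsyn{h} \colon \cdsyn{\h} \to \cdsyn{\g}$, send a generating object $v \in V_{\h}$ to $h(v)$. But $h$ is a map $V_{\g} \to V_{\h}$, so $h(v)$ is undefined for $v \in V_{\h}$. The same issue recurs when you write ``$w \to v$ in $\h$ implies $h(w) \to h(v)$ in $\g$'' and ``$\{h(w) : w \in \parents[\h]{v}\} \subseteq \parents[\g]{h(v)}$''. Once one untangles the direction confusion, what you have actually described is a \emph{covariant} assignment (push a vertex forward along the graph map, push a generator forward to the single generator at the image vertex, then adjust with copy/delete). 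This does not realise the contravariant functor claimed in the theorem. Note also that the generators of $\cdsyn{\g}$ have type $\parents{v} \to v$, not $v \to \parents{v}$; with the correct orientation, your ``post-compose with delete/copy'' step becomes a \emph{pre}-composition that would have to \emph{create} wires for the parents of $h(v)$ not hit by $h$, which is impossible in a CD-category.

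The paper's construction is genuinely different and is driven by \emph{preimages}. For $\alpha \colon \g \to \g'$, the functor $\cdsyn{\alpha} \colon \cdsyn{\g'} \to \cdsyn{\g}$ sends a vertex $v \in V_{\g'}$ to the ordered tensor $\bigotimes_{u \in \alpha^{-1}(v)} u$, and sends the generator $\parents{v} \to v$ to the \emph{copy-composition} $\copycomp{S_v}$ of the set $S_v$ of \emph{all} generators of $\cdsyn{\g}$ whose output lies in $\alpha^{-1}(v)$, padded with deletes on the unused part of $\alpha^{-1}(\parents{v})$. In particular a single generator on the $\g'$ side may be sent to a composite involving many generators on the $\g$ side (cf.\ \cref{fig:graphhoms_bn_ex}\subref{subfig:graphhoms_bn_ex2}, where one box is sent to two). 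The functoriality check then reduces to the identity $\bigcup_{w \in \beta^{-1}(v)} S^{\alpha}_{w} = S^{\beta\alpha}_{v}$ together with an inductive lemma about how $\cdsyn{\alpha}$ acts on copy-compositions. Your outline contains neither the preimage-on-objects step nor the ``combine all generators over the fibre'' step, and without these the contravariant functor cannot be built.
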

	The contravariance is motivated by considering the preimage of vertices. We provide some intuition on how the functor works via \cref{fig:graphhoms_bn_ex}, and defer the formal discussion to \cref{sec:graphhom_bn}.
 
\begin{figure}[tp]
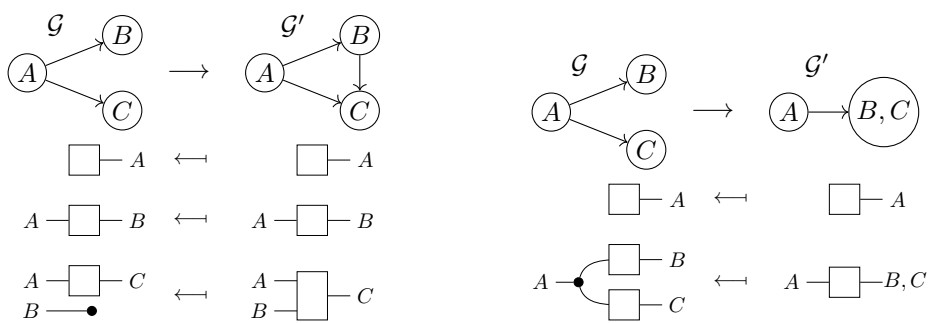

	\begin{subfigure}[b]{0.44\textwidth}
		\centering
		\tikzfig{odag_ex1}
		\par\vspace{1ex}
		\scalebox{0.8}{\tikzfig{odag_ex1_stringdiagram}}
	\subcaption{The missing connection $B\to C$ in $\g$ results in a deletion of the $B$ input.}\label{subfig:graphhoms_bn_ex1}
	\end{subfigure}
	\hspace*{0.6cm}
	\begin{subfigure}[b]{0.44\textwidth}
		\centering
		\tikzfig{odag_ex2}
		\par\vspace{1ex}
		\scalebox{0.8}{\tikzfig{odag_ex2_stringdiagram}}
	\subcaption{The added distinction of $B$ and $C$ gives rise to a copy.}\label{subfig:graphhoms_bn_ex2}
	\end{subfigure}
	\caption{Examples of the contravariant action of the functor of \Cref{thm:graphhom-cd} on morphisms $\g \to \g'$ (top), resulting in CD-functors (bottom), of which we describe the action on generators of $\cdsyn{\g}$.}\label{fig:graphhoms_bn_ex}
\end{figure}
	When the probability distribution associated to a Bayesian network does not have full support, i.e.\ there is some value $x$ such that $P(x)=0$, the Bayesian network structure has an inherent redundancy.
	For example, consider the family of Bayesian networks $F_p$, one for each $p \in [0,1]$,  described by 
	\[
	\tikzfig{dag_AB}
	\]
	where $A=\lbrace 0,1,2\rbrace$, $B=\lbrace b,\no{b} \rbrace$.
	All of these networks are associated to the same probability distribution $P(A,B)$, but they are different according to Definition~\ref{def:BN} (and \Cref{prop:bayesian_functor}). However, distinguishing them is not always desirable, for instance when studying the conditional independencies of $P(A,B)$. The important property is the existence, rather than the uniqueness of a Bayesian network representing the independencies of $P(A,B)$. 
		For our developments, it is important to account for this different perspective, which we do below.
			
\begin{definition}[Irredundant Bayesian Network] \label{def:irredundentBN}
	An \emph{irredundant} Bayesian network over $\g$ is a probability distribution $P$ admitting an assignment $\tau(v)$ of a finite set for any vertex $v \in V_{\g}$ such that $P(V_{\g})= \prod_{v \in V_{\g}} f_v (v \given \parents{v})$ for some stochastic matrices $f_v \colon \tau(v)\times \prod_{w \in\parents{v}} \tau(w)\to [0,1]$.
\end{definition}

We introduce a characterisation analogous to~\Cref{prop:bayesian_functor} for the irredundant case. Note $\bullet$, the one-vertex graph, is the final object in $\odag$. Thus given an ordered DAG $\g$, there always exists a unique morphism $\g \to \bullet$, contravariantly yielding a morphism (via \Cref{thm:graphhom-cd}) $!_{\g} \colon \cdsyn{\bullet}\to \cdsyn{\g}$. Another important observation is that, by definition, $\cdsyn{\bullet}$ is generated by the monoidal signature consisting of a single generating object $\bullet$ and a single generating morphisms \minitikzfig{state}, of type $I \to \bullet$. 
Therefore, any CD-functor $\cdsyn{\bullet}\to \finstoch$ is entirely captured by the assignments $\bullet \mapsto X$ and $\minitikzfig{state} \mapsto \omega$, where $\omega \colon I \to X$ is a probability distribution on $X$. Paired with~\cref{prop:bayesian_functor}, this justifies the following characterisation for `distributions factoring through a Bayesian network'. 

\begin{proposition}\label{prop:irredundantbayesian_functor}
	Let $\g$ be an ordered DAG. 
	Irredundant Bayesian networks over $\g$ are in bijective correspondence with CD-functors $\omega \colon \cdsyn{\bullet}\to \finstoch$ factorising as $\cdsyn{\bullet}\xrightarrow{!_{\g}}\cdsyn{\g}\xrightarrow{F} \finstoch$ for some CD-functor $F$, i.e.\ such that $\omega = F !_{\g}$.
\end{proposition}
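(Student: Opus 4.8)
The strategy is to reduce the claim to \cref{prop:bayesian_functor} together with an explicit description of the functor $!_{\g}\colon \cdsyn{\bullet}\to\cdsyn{\g}$ on the single generator of its domain. As observed just before the statement, a CD-functor $\omega\colon\cdsyn{\bullet}\to\finstoch$ amounts exactly to a pair $\bigl(\omega(\bullet),\,\omega(\minitikzfig{state})\bigr)$ consisting of a finite set and a probability distribution $\omega(\minitikzfig{state})\colon I\to\omega(\bullet)$; and an irredundant Bayesian network over $\g$ (\cref{def:irredundentBN}) is exactly a distribution $P$ on a set of the form $\prod_{v\in V_{\g}}\tau(v)$ that can be written $P=\prod_{v\in V_{\g}}f_v(v\given\parents{v})$. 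I would therefore set up the candidate bijection by sending such a $P$ to the CD-functor $\omega_P$ determined by $\omega_P(\bullet)\coloneqq\prod_v\tau(v)$ and $\omega_P(\minitikzfig{state})\coloneqq P$, and conversely a CD-functor $\omega$ that factors through $!_{\g}$ to the distribution $\omega(\minitikzfig{state})$ on $\omega(\bullet)$. Since a CD-functor out of the free category $\cdsyn{\bullet}$ is determined by its values on $\bullet$ and on $\minitikzfig{state}$, these two assignments are inverse to one another as soon as each is shown well defined, so the entire content of the proof lies in the two well-definedness statements, and both reduce to a single computation.

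That computation is the following lemma, which I would prove first: for every CD-functor $F\colon\cdsyn{\g}\to\finstoch$, corresponding by \cref{prop:bayesian_functor} to Bayesian-network data $\bigl(\tau,(\phi_v)_v\bigr)$, the composite $F!_{\g}\colon\cdsyn{\bullet}\to\finstoch$ sends $\bullet$ to $\bigotimes_{v\in V_{\g}}\tau(v)$ (which we identify with the product $\prod_v\tau(v)$ via the strictness convention for $\finstoch$) and sends $\minitikzfig{state}$ to the joint distribution $\prod_{v\in V_{\g}}\phi_v(v\given\parents{v})$. To establish it, I would invoke the explicit action of the functor of \cref{thm:graphhom-cd} on the unique morphism $\g\to\bullet$ of $\odag$ (worked out in the appendix): on objects it maps $\bullet$ to $\bigotimes_{v\in V_{\g}}v$ (the preimage of the single vertex), and it maps $\minitikzfig{state}$ to the canonical string diagram $d_{\g}\colon I\to\bigotimes_v v$ of $\cdsyn{\g}$ built from the generators $\minitikzfig{parents_morph}$, composed in topological order, with copy maps $\minitikzfig{copyX}$ routing each vertex output both to its children and to the global output---this is precisely the string-diagrammatic rendering of a Bayesian network shown in \cref{fig:ex_bayesian_network}. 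Applying $F$ and unfolding the definitions of composition (Chapman--Kolmogorov) and of the copy maps in $\finstoch$ then gives $F(d_{\g})=\prod_v\phi_v(v\given\parents{v})$, i.e.\ the familiar fact that the string-diagram semantics of a Bayesian network returns its joint distribution \cite{fong2013causaltheoriescategoricalperspective,jacobs2019causal_surgery}.

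Granting the lemma, both well-definedness statements are immediate. If $\omega=F!_{\g}$, then, writing $\bigl(\tau,(\phi_v)_v\bigr)$ for the Bayesian network corresponding to $F$, we get $\omega(\bullet)=\prod_v\tau(v)$ and $\omega(\minitikzfig{state})=\prod_v\phi_v(v\given\parents{v})$, so $\omega(\minitikzfig{state})$ is a distribution of the shape demanded by \cref{def:irredundentBN}, hence an irredundant Bayesian network over $\g$, and moreover $\omega=\omega_{\omega(\minitikzfig{state})}$. Conversely, starting from an irredundant Bayesian network $P=\prod_v f_v(v\given\parents{v})$ on $\prod_v\tau(v)$, the tuple $\bigl(\tau,(f_v)_v\bigr)$ is a Bayesian network over $\g$ in the sense of \cref{def:BN}, so \cref{prop:bayesian_functor} yields a CD-functor $F$; by the lemma $F!_{\g}=\omega_P$, so $\omega_P$ does factor through $!_{\g}$. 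The two maps are then mutually inverse by construction, and the bijection follows.

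The main obstacle---the only ingredient that is not routine---is the identification of $!_{\g}(\minitikzfig{state})$ with the canonical Bayesian-network string diagram $d_{\g}$, which relies on the explicit construction of the functor of \cref{thm:graphhom-cd} on the collapsing homomorphism $\g\to\bullet$ (deferred to the appendix), together with the notation-heavy but elementary verification in $\finstoch$ that interpreting $d_{\g}$ produces $\prod_v\phi_v(v\given\parents{v})$. The remaining steps---the $\bigl(X,\omega\bigr)$-presentation of CD-functors out of $\cdsyn{\bullet}$ and the passage between the two sides of the correspondence---are purely formal.
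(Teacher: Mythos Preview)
Your proposal is correct and follows essentially the same route as the paper. The paper's proof also hinges on the single computation you isolate as your ``lemma'': namely that $!_{\g}$ sends the generator $\minitikzfig{state}$ to the canonical string diagram of $\g$ (what the paper calls the copy-composition $\copycomp{\Sigma_{\g}}$), and that applying any $F\colon\cdsyn{\g}\to\finstoch$ to it yields the product $\prod_v f_v(v\given\parents{v})$; this is packaged in the paper as \cref{rem:factorising_distributions_DAG}. The only cosmetic difference is that the paper constructs the factorising $F$ directly on generators rather than invoking \cref{prop:bayesian_functor}, but that amounts to the same thing.
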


\begin{example}\label{ex:BEAR!}
	To better highlight the connection given by \cref{prop:irredundantbayesian_functor}, let us consider \cref{ex:BEAR}. 
	The CD-functor $!_{\g}\colon \cdsyn{\bullet}\to \cdsyn{\g}$ is then defined by sending $\bullet$ to $B\tensor E\tensor A\tensor R$ and $\minitikzfig{state}$ to the string diagram depicted in \cref{fig:ex_bayesian_network}. 
	When composing $!_{\g}$ with the CD-functor $\cdsyn{\g}\to \finstoch$ described by the probability tables in \cref{fig:ex_bayesian_network}, we indeed obtain the probability distribution associated to the Bayesian network.
\end{example}

We now focus on Markov networks, with the goal of achieving characterisations analogous to \Cref{prop:bayesian_functor,prop:irredundantbayesian_functor}. 
First, we recall how these models are defined in the literature. 

\begin{definition} 
	Given an undirected graph $\h=(V_{\h}, E_{\h})$, a \newterm{clique} $C$ is a subset of $V_{\h}$ such that for each $v,w\in C$, the set $\lbrace v,w \rbrace$ is an edge. 
The set of cliques is denoted by $\clique{\h}$.
A \newterm{Markov network} over $\h$ is given by an assignment $\tau(v)$ of a finite set for each vertex $v$, together with a function $\phi_C \colon \prod_{v \in C}\tau(v) \to \mathbb{R}^{\ge 0}$, called \emph{factor}, for each clique $C \in \clique{\h}$. 
\end{definition}
Such a Markov network yields a distribution defined by $P(V_{\h}) = \frac{1}{Z}\prod_{C\in \clique{\h}} \phi_C(C)$, where $Z$ is a normalisation coefficient. 
Note that $Z$ can be zero, thus $P$ is either a probability distribution or identically zero.
When $P$ satisfies the latter, we say that the Markov network is \newterm{degenerate}. The unnormalised distribution associated to a Markov network can be represented diagrammatically by comparing all occurrences of the outputs of the factors. Before formulating this construction in whole generality, we show it via an example.

\begin{figure}[!ht]
	\centering
	\begin{subfigure}[b]{0.43\textwidth}
		\centering
	\tikzfig{undirected_graph1}
	\end{subfigure}
	\hspace*{1cm}
	\begin{subfigure}[b]{0.43\textwidth}
		\centering
	\scalebox{0.8}{\tikzfig{markov_network_ex}}
	\end{subfigure}
	\caption{An undirected graph and the string diagram representing its unnormalised distribution.
	Although more graphically complex, the string diagram makes all contributing factors explicit. This approach is also commonly reflected in the theory of PGMs through the use of factor graphs, which are more descriptive.}
	\label{fig:markov_network_ex}
\end{figure}
\begin{example}\label{ex:misconception}
	The undirected graph in \cref{fig:markov_network_ex} illustrates the differences between Markov and Bayesian networks. In this scenario, adapted from \cite[Ex 3.8]{koller2009probabilistic}, four students --- Alice ($A$), Bob ($B$), Charles ($C$), and Debbie ($D$) --- meet in pairs to work on their class homework. $A$ and $C$ do not get along well, nor do $B$ and $D$, so the only pairs that do \emph{not} meet are these two.
	During class, the professor misspoke, leading to a potential misconception among the students.
	Since $A$ and $C$ do not communicate directly to each other, they can only influence each other through $B$ and $D$. 
	Similarly, $B$ and $D$ only influence each other through $A$ and $C$.
	Note the symmetry of these relationships cannot be captured by a Bayesian network, as it inherently requires a specific choice of directionality among student-vertices. An example of a Markov network over this graph is given by the factors below, where vertices are associated with sets $A=\lbrace a,\no{a}\rbrace$, $B=\lbrace b,\no{b}\rbrace$, $C=\lbrace c,\no{c}\rbrace$, $D=\lbrace d,\no{d}\rbrace$.
	\begin{equation}\label{tab:misconception_factors}
		{\begin{tabular}{|c|c|c|}
		\hline
		\multicolumn{3}{|c|}{$\phi_{AB}$}\\
		\hline\hline
		$a$&$b$& $10$ \\
		\hline 
		$a$ & $\no{b}$ & $1$\\
		\hline 
		$\no{a}$& $b$ & $5$\\
		\hline
		$\no{a}$&$\no{b}$ & $30$\\
		\hline
		\end{tabular}}	
		\hspace{0.3cm}
		{\begin{tabular}{|c|c|c|}
		\hline
		\multicolumn{3}{|c|}{$\phi_{BC}$}\\
		\hline\hline
		$b$&$c$& $100$ \\
		\hline 
		$b$ & $\no{c}$ & $1$\\
		\hline 
		$\no{b}$& $c$ & $1$\\
		\hline
		$\no{b}$&$\no{c}$ & $100$\\
		\hline
		\end{tabular}}	
		\hspace{0.3cm}
		{\begin{tabular}{|c|c|c|}
		\hline
		\multicolumn{3}{|c|}{$\phi_{CD}$}\\
		\hline\hline
		$c$&$d$& $1$ \\
		\hline 
		$c$ & $\no{d}$ & $100$\\
		\hline 
		$\no{c}$& $d$ & $100$\\
		\hline
		$\no{c}$&$\no{d}$ & $1$\\
		\hline
		\end{tabular}}	
		\hspace{0.3cm}
		{\begin{tabular}{|c|c|c|}
		\hline
		\multicolumn{3}{|c|}{$\phi_{AD}$}\\
		\hline\hline
		$a$&$d$& $100$ \\
		\hline 
		$a$ & $\no{d}$ & $1$\\
		\hline 
		$\no{a}$& $d$ & $1$\\
		\hline
		$\no{a}$&$\no{d}$ & $100$\\
		\hline
		\end{tabular}}
	\end{equation} 

	In our interpretation, $\no{x}$ indicates that $X$ does not have the misconception. As we do not wish to consider an inherent difference between the various students, all the factors over a single node are omitted (we can simply set them to be constantly one). 
	The chosen numbers highlight some key aspects of the students relationships: $C$ and $D$ are prone to disagree, while all other pairs tend to agree. Additionally, $A$ and $B$ are less likely to have the misconception, and the fact that $\phi_{AB}(a,\no{b})< \phi_{AB} (\no{a},b)$ indicates that when they do disagree, it is more plausible that $B$ had the misconception.
\end{example}

To establish an analogue of \cref{prop:bayesian_functor} for Markov networks, we use the free construction of \cref{ex:freecor}.
Given any \emph{ordered} undirected graph $\h=(V_\mathcal{H}, E_{\mathcal{H}})$, consider the signature given by $(V_\mathcal{H}, \Sigma_{\mathcal{H}})$, with $\Sigma_{\mathcal{H}} \coloneqq \left\lbrace{\minitikzfig{clique_morph} \given C \in \clique{\mathcal{H}}}\right\rbrace$, and define $\syn{\mathcal{H}} \coloneqq \freehyp{V_\mathcal{H}, \Sigma_{\mathcal{H}}}$.
Note that the order here is necessary to give a well-defined box for each clique, since otherwise we would not know how to order the outputs (cf.~\cref{rem:ordering}).
\begin{proposition}\label{prop:markov_functor}
	Let $\h$ be an ordered undirected graph.
	Markov networks over $\h$ are in bijective correspondence to hypergraph functors $\syn{\h} \to \mat$.
\end{proposition}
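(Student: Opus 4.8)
The plan is to mimic the structure of the proof of \cref{prop:bayesian_functor}, building a bijection in both directions and checking the two composites are identities. The key conceptual point is that $\syn{\h} = \freehyp{V_\h, \Sigma_\h}$ is freely generated, so by the universal property a hypergraph functor $F \colon \syn{\h} \to \mat$ is determined exactly by where it sends the generating objects $v \in V_\h$ and the generating morphisms $\minitikzfig{clique_morph} \in \Sigma_\h$, with \emph{no} further constraints (the Frobenius and comonoid laws are already quotiented into $\syn{\h}$, and $\mat$ is a genuine hypergraph category by \cref{ex:main}). On the other side, a Markov network over $\h$ is by definition the data of a finite set $\tau(v)$ for each $v$ and a factor $\phi_C \colon \prod_{v\in C}\tau(v) \to \mathbb{R}^{\ge 0}$ for each clique $C$. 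Since a morphism $I \to \bigotimes_{v\in C}\tau(v)$ in $\mat$ is precisely a map $\prod_{v\in C}\tau(v) \to \mathbb{R}^{\ge 0}$ (a state, with no stochasticity condition, which is why $\mat$ rather than $\finstoch$ is needed), the two kinds of data are visibly in bijection generator-by-generator.

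Concretely, I would first fix the direction ``Markov network $\mapsto$ functor''. Given $(\tau, \{\phi_C\})$, define $F_\tau$ on objects by $F_\tau(v) := \tau(v)$ (extended to lists $V_\h^\star$ by monoidal product, using the strictness convention \cref{assump:strict} and the list-concatenation caveat of \cref{ex:finstoch}), and on the generating morphism for clique $C$ by $F_\tau\bigl(\minitikzfig{clique_morph}\bigr) := \phi_C$, viewed as a state $I \to \bigotimes_{v\in C}\tau(v)$ in $\mat$. The universal property of $\freehyp{-}$ (\cref{ex:freecor}) extends this uniquely to a hypergraph functor, \emph{provided} the assignment respects nothing more than typing — which it does, since $\minitikzfig{clique_morph}$ has codomain $\bigotimes_{v\in C}\tau(v)$ by construction of $\Sigma_\h$. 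Conversely, given a hypergraph functor $G \colon \syn{\h} \to \mat$, I would extract $\tau_G(v) := G(v)$ and $\phi^G_C := G\bigl(\minitikzfig{clique_morph}\bigr)$, again reading the latter as a factor $\prod_{v\in C}\tau_G(v)\to\mathbb{R}^{\ge 0}$. That the two passages are mutually inverse is then immediate on generators, and a hypergraph functor is determined by its action on generators, so the bijection follows. I would also remark (as the paragraph before the statement and \cref{fig:markov_network_ex} already hint) that the \emph{unnormalised distribution} $P(V_\h)$ of the network is recovered as the composite state obtained inside $\syn{\h}$ by copying each $v$-wire out and comparing all factor-outputs sharing a vertex, then transported by $F_\tau$ — so the functorial and classical presentations agree not just on data but on the induced distribution; but this is a sanity check, not strictly needed for the bijection.

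The main obstacle, such as it is, is bookkeeping rather than mathematics: one must be careful that the \emph{ordering} on $\h$ makes $\minitikzfig{clique_morph}$ a genuinely well-defined single generator with a fixed tuple of outputs (flagged in the sentence introducing $\syn{\h}$ and in \cref{rem:ordering}), so that ``the'' factor $\phi_C$ and ``the'' morphism $G(\minitikzfig{clique_morph})$ correspond without a dangling permutation ambiguity; and one must confirm that no hypergraph-category axiom imposes a hidden relation among the $\Sigma_\h$-generators — it does not, precisely because $\syn{\h}$ is \emph{freely} generated and $\mat$ satisfies exactly the special-Frobenius laws \eqref{eq:hypergraph} and no more (in particular $\minitikzfig{del_omni}\ne\id_I$ in general, per \cref{ex:main}, so states are unconstrained). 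Hence the proof is essentially ``unfold both sides of the definition and invoke the universal property of the free hypergraph category,'' parallel to \cite[Proposition 3.1]{jacobs2019causal_surgery} for the CD/$\finstoch$ case.
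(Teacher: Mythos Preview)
Your proposal is correct and follows essentially the same approach as the paper: both directions of the bijection are defined on generators by $F(v)\coloneqq\tau(v)$, $F(\minitikzfig{clique_morph})\coloneqq\phi_C$ and conversely, invoking the universal property of $\freehyp{-}$, and mutual inverseness is immediate. Your additional remarks on ordering, the absence of hidden relations, and the recovery of the unnormalised distribution are sound elaborations, but the paper's own proof is a terse two-line version of exactly this argument.
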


For instance, the Markov network of \Cref{ex:misconception} yields $V_\mathcal{H} = \{A,B,C,D\}$ and $\Sigma_{\mathcal{H}}$ including all the generators $\phi$ appearing in the string diagram of \cref{fig:markov_network_ex}, and the functor $\syn{\h}\to \mat$ is defined according to the factors defined in \eqref{tab:misconception_factors}.

Just as in the case of Bayesian networks, also the construction of $\syn{\h}$ yields a functor. The categories involved are $\ougr$, the category of ordered undirected graphs and order-preserving graph homomorphisms, and $\hypcat$, introduced in \Cref{def:hypcat}.

\begin{theorem}\label{thm:graphhom-hyp}
	There is a contravariant functor $\syn{}\colon \ougr  \to \hypcat$ mapping a ordered undirected graph $\h$ to the hypergraph category $\syn{\h}$. 
\end{theorem}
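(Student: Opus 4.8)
The plan is to follow the same pattern as (the proof of) \Cref{thm:graphhom-cd}. On objects $\syn{}$ is the assignment already described, $\h\mapsto\syn\h=\freehyp{V_\h,\Sigma_\h}$. Since $\syn{}$ is to be contravariant, an order-preserving graph homomorphism $f\colon\h\to\h'$ must be sent to a hypergraph functor $\syn f\colon\syn{\h'}\to\syn\h$. The key leverage is that $\syn{\h'}$ is \emph{freely} generated as a hypergraph category by $(V_{\h'},\Sigma_{\h'})$ (\Cref{ex:freecor}): to define $\syn f$ it suffices to fix (i) the image in $\mathrm{Ob}(\syn\h)$ of each generating object $v'\in V_{\h'}$ and (ii) for each generator $\phi_{C'}\in\Sigma_{\h'}$ a morphism of $\syn\h$ of the correct type; the universal property then yields a unique hypergraph functor, and no relations need be re-checked because the equations of \eqref{eq:comonoids}--\eqref{eq:hypergraph} already hold in $\syn\h$.

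For (i) I would put $v'\mapsto\bigtensor_{v\in f^{-1}(v')}v$, the tensor ordered by the total order on $V_\h$, extended monoidally by concatenation; order-preservation of $f$ is needed only for strictness (being monotone, $f$ has convex fibres occurring in order, so the image of an ordered object $\bigtensor_{v'\in C'}v'$ of $\syn{\h'}$ is literally $\bigtensor_{v\in f^{-1}(C')}v$ in $\syn\h$, with no symmetry inserted). For (ii), I would send the generator $\phi_{C'}\colon I\to\bigtensor_{v'\in C'}v'$ to the state $I\to\bigtensor_{v\in f^{-1}(C')}v$ obtained by tensoring the clique generators $\phi_C$ of $\h$ over all cliques $C$ of $\h$ with $f(C)=C'$ (note $f$ is injective on any clique, so $f(C)$ is again a clique), together with one `omni' for each vertex of $f^{-1}(C')$ not covered by such a $C$, and then using `compare' to merge, for each vertex $v\in f^{-1}(C')$, all wires labelled $v$ into a single output. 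Informally, and unlike the directed case of \Cref{thm:graphhom-cd} where copies and deletes appear, the undirected translation uses only the Frobenius monoid part: fibres are glued by comparison, and when $f^{-1}(C')$ is not itself a clique it is broken up into the genuine cliques of $\h$ it contains.

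It remains to verify functoriality. $\syn{\id_\h}=\id_{\syn\h}$ is immediate. For $\h\xrightarrow{f}\h'\xrightarrow{g}\h''$, one must show $\syn{g\circ f}=\syn f\circ\syn g$, which by freeness reduces to a comparison on generators of $\syn{\h''}$: on objects, both sides give $\bigtensor_{v\in(g\circ f)^{-1}(v'')}v$ since the fibre of a composite is the composite of fibres and all orders come from $V_\h$; on a generator $\phi_{C''}$, applying $\syn g$ then $\syn f$ re-expands each $\phi_{C'}$ (with $g(C')=C''$) into the $\phi_C$ with $f(C)=C'$, carrying compares to compares and omnis to omnis, and since $C'=f(C)$ is forced, both sides glue exactly the generators $\phi_C$ with $(g\circ f)(C)=C''$, each once, in the same pattern. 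I expect the main obstacle to be precisely this last point: showing that \emph{gluing in stages coincides with gluing all at once}, and that the construction is independent of the arbitrary choices involved (the order in which the $\phi_C$ are tensored, the shape of the comparison tree). This is a coherence/confluence argument, resting on the associativity and commutativity of `compare', on `omni' being its unit (so the leftover-vertex omnis introduced at intermediate stages collapse harmlessly), and on the special Frobenius laws of \eqref{eq:hypergraph} together with the symmetry of the monoidal structure. The bookkeeping is routine but delicate, and the details are deferred to the appendix.
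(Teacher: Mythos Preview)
Your proposal is correct and matches the paper's approach: the paper likewise defines $\syn{\alpha}$ on a generating object $v$ as $\bigotimes_{w\in\alpha^{-1}(v)}w$ and on a generator $\phi_C$ as the ``compare-composition'' $\compcomp{S_C}$ of all $\phi_D$ with $\alpha(D)=C$ (padded with omnis), then reduces functoriality to the set equality $\bigcup_{\phi\in S_C^{\beta}}S_{\Out{\phi}}^{\alpha}=S_C^{\beta\alpha}$, which is exactly your ``gluing in stages equals gluing all at once''. One small inaccuracy: $f$ need not be injective on cliques (graph homomorphisms may collapse adjacent vertices), but $f(C)$ is a clique regardless, so your argument is unaffected.
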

We provide some intuition on how the functor works via \cref{fig:graphhoms_mn_ex}, and defer the formal discussion to \cref{sec:graphhom_mn}.

	The observation about redundancy (\cref{def:irredundentBN}) is even more relevant for Markov networks. The generality of cliques causes a considerable redundancy of information, which is often `pulled under the rug' in applications, but should be made explicit in a mathematically rigorous treatment. 
	
\begin{definition}[Irredundant Markov Network]\label{def:irredundantMN}
	An \emph{irredundant} Markov network over an (ordered) undirected graph $\h$ is a probability distribution $P$ admitting an assignment $\tau(v)$ of a finite set for any vertex $v\in V_{\h}$ such that $P(V_{\h})=\frac{1}{Z}\prod_{C\in \clique{\h}} \phi_C(x_C)$ for some factors $\phi_C$, where $Z$ is a normalisation coefficient.
\end{definition}

Note that, because $P$ is a proper distribution, $Z \neq 0$, so any irredundant network is also non-degenerate. As in the case of Bayesian networks, $\bullet$ is the final object in $\ougr$, yielding a morphism $!_{\h}\colon \syn{\bullet}\to \syn{\h}$. However, to take care of the normalisation coefficient, a characterisation analogous to the one of \Cref{prop:irredundantbayesian_functor} requires a few extra pieces. 
	
	\begin{proposition}\label{prop:irredundantmarkov_functor}
	Let $\h$ be an ordered undirected graph. 
	Irredundant Markov networks over $\h$ are in bijective correspondence with CD-functors $\omega \colon \cdsyn{\bullet}\to \finstoch$ for which there exists a commutative diagram as follows for some hypergraph functor $\syn{\h} \to \mat$:
	\begin{equation}\label{eq:mn_factorisation}
		\begin{tikzcd}[row sep=0.1pt]
			& \syn{\h} \ar[r] &\mat\ar[dr,"q"]&  \\
			\syn{\bullet} \ar[ru,"!"] &&& \finprojstoch\\
			& \cdsyn{\bullet} \ar[lu,"\star"]\ar[r,"\omega"] & \finstoch \ar[ur,"i"]&
		\end{tikzcd}
	\end{equation}
	where $\star \colon \cdsyn{\bullet}\to \syn{\bullet}$ sends the unique generator to itself, and $q,i$ are the functors of the normalization cospan, see \cref{rem:normalisation_cospan}.
	\end{proposition}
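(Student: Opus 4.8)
The plan is to unfold diagram~\eqref{eq:mn_factorisation} by tracking where the unique generating morphism of $\cdsyn{\bullet}$ — the state $I \to \bullet$ — is sent along the two paths, and to combine this with \cref{prop:markov_functor}, which identifies hypergraph functors $\syn{\h}\to\mat$ with (possibly degenerate) Markov networks over $\h$. For the top path, such a hypergraph functor $M$ corresponds to an assignment $\tau(v)$ of finite sets and factors $\phi_C$ for $C \in \clique{\h}$; pre-composing $M$ with $!_{\h}\colon \syn{\bullet}\to\syn{\h}$ (see \cref{thm:graphhom-hyp}) and with $\star\colon \cdsyn{\bullet}\to\syn{\bullet}$ sends $\bullet$ to $\bigotimes_{v\in V_{\h}}\tau(v)$ and the state to the string diagram of \cref{fig:markov_network_ex} read in $\mat$, namely the unnormalised distribution $\widetilde{P}\coloneqq\prod_{C\in\clique{\h}}\phi_C\colon I \to \bigotimes_v\tau(v)$. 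Post-composing with the quotient $q$ then gives the object $\bigotimes_v\tau(v)$ together with the class $[\widetilde{P}]_{\propto}$ in $\finprojstoch$.

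For the bottom path, recall (as noted just before \cref{prop:irredundantbayesian_functor}) that a CD-functor $\omega\colon\cdsyn{\bullet}\to\finstoch$ amounts to a choice of a finite set $X$ (the image of $\bullet$) and a probability distribution $\omega_0\colon I \to X$ (the image of the state); post-composition with the embedding $i$ yields $(X,[\omega_0]_{\propto})$. Hence, for a fixed $M$, diagram~\eqref{eq:mn_factorisation} commutes iff $X=\bigotimes_v\tau(v)$ and $[\omega_0]_{\propto}=[\widetilde{P}]_{\propto}$, i.e.\ $\widetilde{P}=\lambda\,\omega_0$ for some $\lambda\in\mathbb{R}^{>0}$. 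Summing over $X$ and using $\sum\omega_0=1$ forces $\lambda=Z\coloneqq\sum\widetilde{P}$, which must therefore be nonzero; then $\omega_0=\tfrac1Z\prod_{C}\phi_C$. Conversely such a $\lambda$ exists precisely when $Z\neq 0$, i.e.\ when the Markov network is non-degenerate. So a CD-functor $\omega$ admits a commuting instance of~\eqref{eq:mn_factorisation} exactly when its distribution $\omega_0$ can be written as $\tfrac1Z\prod_{C\in\clique{\h}}\phi_C$ for some $\tau$ and factors $\phi_C$ over $\h$ — which is verbatim the content of \cref{def:irredundantMN}. The bijection is then $\omega\mapsto\omega_0$ in one direction and, conversely, an irredundant Markov network (given by $\tau$ and $P$) to the CD-functor with $\bullet\mapsto\bigotimes_v\tau(v)$ and state $\mapsto P$; these are mutually inverse once one notes that the object $\bigotimes_v\tau(v)$ and the distribution together retain exactly the data of an irredundant network, and that only the \emph{existence} of a witnessing $M$ is demanded.

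The main obstacle is the careful bookkeeping of the normalisation coefficient, which is precisely what the normalisation cospan $\mat\xrightarrow{q}\finprojstoch\xleftarrow{i}\finstoch$ of \cref{rem:normalisation_cospan} is built to manage: the quotient $q$ erases the strictly positive scaling ambiguity intrinsic to a product of factors, while the embedding $i$, being faithful, selects the unique probability distribution in each $\propto$-class. One has to verify that the requirement ``$\omega_0$ lies in the image of $i$ and maps to $[\widetilde{P}]_{\propto}$'' is equivalent to $Z\neq 0$, i.e.\ to non-degeneracy, which matches the hypothesis in \cref{def:irredundantMN} that $P$ be a genuine probability distribution. The remaining steps — evaluating the image of the state under $M\circ {!_{\h}}\circ \star$, and checking that the two assignments above are inverse bijections rather than merely surjections — are routine given \cref{prop:markov_functor} and the explicit description of $!_{\h}$.
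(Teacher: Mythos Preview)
Your proposal is correct and follows essentially the same approach as the paper's proof: both arguments unfold diagram~\eqref{eq:mn_factorisation} by tracking the generator $I\to\bullet$ along the two paths, identify commutativity with the relation $\omega_0\propto\prod_C\phi_C$ via the normalisation cospan, and conclude by matching this against \cref{def:irredundantMN}. Your version is slightly more explicit about why the scalar must equal $Z$ and why non-degeneracy is forced, but the structure and the key ingredients (\cref{prop:markov_functor}, the description of $!_{\h}$, and the behaviour of $q$ and $i$) are the same.
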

The commutativity of~\eqref{eq:mn_factorisation} states precisely that the probability distribution $\omega$ associated with $\omega \colon \cdsyn{\bullet}\to \finstoch$ and the unnormalised distribution $Q$ associated with $\cdsyn{\bullet}\to \mat$ satisfies $Q \propto \omega$, i.e.\ there exists a normalisation coefficient $Z$ such that $\omega = \frac{1}{Z}Q$.
\begin{example}
To better understand \cref{prop:irredundantmarkov_functor}, let us consider \cref{ex:misconception}.
The hypergraph functor $!_{\h}\colon \syn{\bullet}\to \syn{\h}$ sends $\bullet$ to $A\tensor B \tensor C \tensor D$ and $\minitikzfig{state}$ to the string diagram in \cref{fig:markov_network_ex}.
The unnormalised distribution $Q$ is then described by the composition $\syn{\bullet}\xrightarrow{!} \syn{\h}\to \mat$, where $\syn{\h}\to \mat$ is determined by \eqref{tab:misconception_factors}.
\end{example}

\begin{figure}[tp]
	\begin{subfigure}[b]{0.44\textwidth}
		\centering
		\tikzfig{ougr_ex1}
		\par\vspace{1ex}
		\scalebox{0.8}{\tikzfig{ougr_ex1_stringdiagram}}
	\subcaption{The missing connection $B\edge C$ in $\h$ results in disregarding the clique $\lbrace B,C\rbrace$ and consequently also $\lbrace A,B,C\rbrace$. For brevity, we used $X$ as a placeholder for the other cliques $\lbrace A\rbrace$, $\lbrace B\rbrace$, $\lbrace C\rbrace$, $\lbrace A,B\rbrace$, and $\lbrace A,C\rbrace$.
	}\label{subfig:graphhoms_mn_ex1}
	\end{subfigure}
	\hspace*{0.6cm}
	\begin{subfigure}[b]{0.44\textwidth}
		\centering
		\tikzfig{ougr_ex2}
		\par\vspace{1ex}
		\scalebox{0.8}{\tikzfig{ougr_ex2_stringdiagram}}
	\subcaption{The added distinction of $B$ and $C$ gives rise to a compare.}\label{subfig:graphhoms_mn_ex2}
	\end{subfigure}
\caption{Examples of the contravariant action of the functor of \Cref{thm:graphhom-hyp} on morphisms $\h \to \h'$ (top), resulting in hypergraph functors (bottom), of which we describe the action on generators of $\syn{\h'}$.}\label{fig:graphhoms_mn_ex}
\end{figure}


\section{Morphisms Between Networks}\label{sec:categoriesnetworks}

In order to define functorial transformations between Bayesian and Markov networks, we need a full definition of the categories involved. 
The characterisations of \Cref{sec:networks} only provide the objects of these categories. 
We now identify suitable notions of morphism, culminating in definitions of the categories of Bayesian networks and Markov networks (\cref{def:bn_cat,def:mn_cat}). 
As our ultimate aim is to describe moralisation and triangulation, and these modifications pertain specifically to the study of conditional independencies, the most natural approach is to focus attention to irredundant Bayesian and Markov networks. 

Given the functorial perspective on networks, a natural candidate for morphisms are compatible pairs of a `morphism between syntaxes' and a `morphism between semantics'. The former will simply be an order-preserving graph homomorphism. For the latter, recall that both irredundant Bayesian and Markov networks are defined by probability distributions factoring through a certain structure. In $\finstoch$ we may regard two such distributions as maps $\omega\colon I \to X$ and $\omega'\colon I \to Y$, and a morphism between them as a stochastic matrix $f\colon X \to Y$ such that $\omega' = f\comp \omega$. Now, in the functorial perspective $\omega,\omega'$ are identified with CD-functors $\cdsyn{\bullet} \to \finstoch$. We can lift the notion of morphism between distributions from $\finstoch$ to the level of such functors, as follows.
\begin{definition}
	Let $\omega,\omega' \colon \cdsyn{\bullet} \to \finstoch$ be two CD-functors.
	A \newterm{monoidal transformation} $\eta \colon \omega \to \omega'$ is a family of morphisms $\eta_X\colon \omega(X) \to \omega'(X)$ for every object $X \in \cdsyn{\bullet}$ such that $\eta_{X\otimes Y}=\eta_X \otimes \eta_Y$ and $\eta_I =\id_I$.
	We say that a monoidal transformation is \newterm{distribution-preserving} if $\eta_{\bullet} \omega(\minitikzfig{state}) = \omega' (\minitikzfig{state})$. 
\end{definition}
The resulting notion is \emph{not} a natural transformation, as one may initially expect: the naturality requirement is too strong, as it only holds when $f$ is a deterministic function (each column has exactly one non-zero value). To justify our notion, observe that distribution-preserving monoidal transformations $\eta\colon \omega \to \omega'$ are in bijective correspondence with stochastic matrices $f$ satisfying $\omega'(\minitikzfig{state}) = f\comp \omega(\minitikzfig{state})$. 
Indeed, the objects of $\cdsyn{\bullet}$ are of the form $\bullet^{\tensor n}$ for some $n$, and thus $\eta_{\bullet^{\otimes n}}=f^{\otimes n}$ describes such a bijection. We are ready to define the two categories of networks.

\begin{definition}\label{def:bn_cat}
	The \newterm{category of (irredundant) Bayesian networks} $\bn$ is defined as:
\begin{itemize}
	\item The objects are pairs $(\omega , \g)$, where $\omega$ is a CD-functor $\cdsyn{\bullet} \to \finstoch$ factoring through the CD-functor $!_{\g} \colon \cdsyn{\bullet}\to \cdsyn{\g}$ associated to $\g \to \bullet$. 
	\item A morphism $(\omega,\g)\to (\omega',\g')$ is a pair $(\alpha,\eta)$, where $\alpha$ is a morphism $\alpha\colon \g' \to \g$  in $\odag$ and $\eta$ is a distribution-preserving monoidal transformation $\omega \to \omega'$. Composition is component-wise.
\end{itemize}
\end{definition}
\begin{definition}\label{def:mn_cat}
	The \newterm{category of (irredundant) Markov networks} $\mn$ is defined as:
	\begin{itemize}
		\item The objects are pairs $(\omega , \h)$, where $\omega$ is a CD-functor $\cdsyn{\bullet} \to \finstoch$ factoring through $\syn{\h}$ according to~\eqref{eq:mn_factorisation}. 
		\item A morphism $(\omega,\h)\to (\omega',\h')$ is given by a morphism $\alpha \colon \h' \to \h$ in $\ougr$ together with a distribution-preserving monoidal transformation $\eta\colon \omega \to \omega'$.  Composition is component-wise.
	\end{itemize}	
\end{definition}

The choice of objects is justified by~\cref{prop:irredundantbayesian_functor,prop:irredundantmarkov_functor}. The chosen directionality of morphisms captures the process of \emph{revealing information}. 
To support this claim, note that for any distribution $\omega$, a morphism $(\omega,\bullet)\to (\omega,\g)$ provides more information about $\omega$ in the form of a Bayesian network structure.
Further insight is gained by considering extra variables.  
Explicitly, take $(\omega,\g)$ and some additional $v\notin V_{\g}$. 
Then, any stochastic matrix $f$ with domain a subset of $V_{\g}$ and codomain $v$ can be used to update the distribution: set $\minitikzfig{wprime}\coloneqq \minitikzfig{fcopyw}$. 
The natural graph $\g'$ associated with $\omega'$ is obtained from $\g$ by adding the vertex $v$ and edges $w\to v$ whenever $w$ is an input of $f$.
This defines a revealing variable morphism $(\omega,\g) \to (\omega',\g')$, given by the surjection $\g' \to \g$ together with $f$.

\begin{remark}
	The chosen direction is not suited to \emph{hide} information. 
	For example, take $(\omega,\g)$ with $\g$ given by $\minitikzfig{2child}$ and imagine we want to delete $A$. 
	In general, the marginal $\omega'$ will display some dependence between $B$ and $C$, so we need to consider $\g'$ on $B$ and $C$ with an edge in either direction. 
	But then the inclusion $\g'\to \g$ is not a graph homomorphism, so this hiding process cannot be seen as a morphism $(\omega,\g)\to (\omega',\g')$.
	Instead, the surjection $\g \to \g'$ sending $A$ to either $B$ or $C$, together with the conditional of $A$ given $B$ and $C$, gives a morphism $(\omega',\g')\to (\omega,\g)$.
\end{remark}

\section{Moralisation and Triangulation as Functors}\label{sec:moralisation}

Moralisation and triangulation are well-known transformations that bridge Bayesian and Markov networks. 
Their importance lies in enabling reasonings and inference methods that are specific to the other type of network. 
A key example is the junction tree algorithm of Markov networks, which is widely used in machine learning to extract information about marginals and conditionals, see e.g.\ \cite{koller2009probabilistic,barberBRML2012}.

The payoff of our perspective is the ability to describe such network transformations by only manipulating the syntactic level, i.e. the categories $\cdsyn{\g}$ and $\syn{\h}$. Moralisation and triangulation will amount to precomposition with inductively defined functors, see \eqref{eq:moralisation_functor}, \eqref{eq:triangulation} below. We begin by focusing on moralisation, which transforms a Bayesian network $\g$ into a Markov network $\mor{\mathcal{G}}$, with the property that a distribution $\omega$ factorising through $\g$ also factorises through $\mor{\mathcal{G}}$~\cite{koller2009probabilistic}.

\begin{definition}\label{def:moralisation}
Let $\g$ be a DAG. Its \newterm{moralisation} $\mor{\mathcal{G}}$ is the undirected graph whose vertices are the same as $\mathcal{G}$ and there is an edge between $v$ and $w$ (written $v \edge w$) whenever  in $\g$ there is an edge between them, or they are both parents of the same vertex.
\end{definition}

We now discuss how to turn moralisation into a functor $\bn \to \mn$ (see \cref{sec:functoriality_proofs} for the details). Recall that an (irredundant) Bayesian network, that is, an object of $\bn$, is a pair $(\omega, \g)$ of a DAG $\g$ and a functor $\omega \colon \cdsyn{\bullet} \to \finstoch$ factorising as
$\cdsyn{\bullet}\xrightarrow{!_{\g}} \cdsyn{\g}\xrightarrow{F} \finstoch$ for some $F \colon \cdsyn{\g} \to \finstoch$. 
	The corresponding moralisation, an object of $\mn$, is going to be defined as $(\omega,\mor{\g})$, where $\omega$ should factorise through $\syn{\mor{\g}}$ according to \eqref{eq:mn_factorisation}, instantiated as follows.
	\begin{equation}\label{eq:factoringdistribution_maintext}
		\begin{tikzcd}[row sep=0.1pt]
			& \syn{\mor{\g}} \ar[r] &\mat\ar[dr,"q"]&  \\
			\syn{\bullet} \ar[ru,"!"] &&& \finprojstoch\\
			& \cdsyn{\bullet} \ar[lu,"\star"]\ar[r,"\omega"] & \finstoch \ar[ur,"i"]&
		\end{tikzcd}
	\end{equation}
	The outstanding question is how to correctly define $\syn{\mor{\g}} \to \mat$ in~\eqref{eq:factoringdistribution_maintext}. This goes in three steps. First, we may construct $\syn{\g} \coloneqq \freehyp{V_{\g}, \Sigma_{\g}}$, the free \emph{hypergraph} category on $\g$, which comes with an induced $\star_{\g}\colon \cdsyn{\g}\to \syn{\g}$ given by the identity on the generators $\Sigma_{\g}$.
Intuitively, this amounts to taking an ``indirect'' perspective on the data of $\g$. Second, observe that $F \colon \cdsyn{\g} \to \finstoch$ also yields a hypergraph functor $\tilde{F} \colon \syn{\g} \to \mat$, defined on generators the same way as $F$. Third, we define $m \colon \syn{\mor{\g}} \to \syn{\g}$ as the hypergraph functor freely obtained by the following mapping on the generators of $\syn{\mor{\g}}$:
\begin{equation}\label{eq:moralisation_functor}
	\scalebox{0.8}{\tikzfig{clique_morph}} \qquad \longmapsto\qquad \begin{cases}
		\scalebox{0.8}{\tikzfig{graph_parents_morph}} & \text{if }C=\lbrace v \rbrace \cup \parents{v} \text{ for some }v \in \g\\
		\scalebox{0.8}{\tikzfig{omni_clique}} & \text{otherwise}
	\end{cases}
\end{equation}
This simple description mimics at the level of string diagrammatic syntax the transformation described by \cref{def:moralisation}. Putting these all together, we obtain the desired hypergraph functor $\syn{\mor{\g}}\xrightarrow{m} \syn{\g} \xrightarrow{\tilde{F}} \mat$, thus completing the definition of the Markov network $(\omega, \mor{\g})$ given by commutativity of~\eqref{eq:factoringdistribution_maintext}. More explicitly, this amounts to the following commutative diagram
\begin{equation}
	\begin{tikzcd}
		\syn{\mor{\g}}\ar[r,"m"]& \syn{\g} \ar[r,"\tilde{F}"] &\mat\ar[dr,"q"]&  \\
		\syn{\bullet} \ar[u,"!_{\mor{\g}}"]\ar[ru,"\tilde{!}_{\g}"] &\cdsyn{\g}\ar[u,"\star_{\g}"]\ar[dr,"F"] && \finprojstoch\\
		& \cdsyn{\bullet} \ar[lu,"\star_{\bullet}"]\ar[r,"\omega"]\ar[u,"!_{\g}"] & \finstoch\ar[uu,hook]\ar[ur,"i"]&
	\end{tikzcd}
\end{equation}
Observe that the only non-obvious step in this process is the definition of $m \colon \syn{\mor{\g}} \to \syn{\g}$. This is the key piece of our functorial view of moralisation, providing two insights: first, moralisation may be decomposed into an inductive definition on the string diagrammatic syntax capturing the graph structures; second, by regarding networks themselves as functors, moralisation may be simply defined by functor precomposition.

\begin{example}
	The moralisation of the DAG in \cref{ex:BEAR} is given by adding an additional edge between $B$ and $E$, since they are both parents of $A$. $m \colon \syn{\mor{\g}} \to \syn{\g}$ maps the string diagram in $\syn{\mor{\g}}$ representing the moralised network, below left, to the string diagram representing the Bayesian network, below right:
	\begin{equation*}
		\scalebox{0.75}{\tikzfig{BEAR_moral2}}
	\end{equation*}
	where we used $\phi$ to denote the generators in $\syn{\mor{\g}}$ and $f$ for the generators in $\syn{\g}$.
	Postcomposing $m$ with $\syn{\g}\to \mat$ sets $\phi_{BA}$, $\phi_{EA}$, $\phi_B$, $\phi_E$ and $\phi_{ER}$ as the stochastic matrices of the Bayesian network, while the other factors are all set to be constantly one.
\end{example}

In full generality, we are able to conclude the following (see \cref{sec:functoriality_proofs} for a proof).
\begin{theorem}\label{thm:moralisation}
	Moralisation gives rise to a functor $\mor{-}\colon \bn \to \mn$ which on objects maps ${(\omega,\g)}$ to $(\omega,\mor{\g})$.
\end{theorem}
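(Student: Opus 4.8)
The plan is to verify that $\mor{-}$ as described is well-defined on objects and morphisms, preserves identities and composition, with the bulk of the work lying in the action on morphisms. On objects, the construction in the main text already exhibits, for each $(\omega,\g)\in\bn$, a factorisation of $\omega$ through $\syn{\mor{\g}}$ of the shape \eqref{eq:mn_factorisation}, so $(\omega,\mor{\g})$ is a legitimate object of $\mn$ by \cref{prop:irredundantmarkov_functor}; the only point to check here is that moralisation $\g\mapsto\mor{\g}$ is well-defined on the underlying \emph{ordered} graph, i.e.\ that the topological order on $\g$ remains a valid vertex order on $\mor{\g}$ (it does, since $\mor{\g}$ only adds edges between vertices that already share a common child, hence are comparable).

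For morphisms, given $(\alpha,\eta)\colon(\omega,\g)\to(\omega',\g')$ I would first check that $\alpha\colon\g'\to\g$ in $\odag$ induces an order-preserving graph homomorphism $\mor{\alpha}\colon\mor{\g'}\to\mor{\g}$: if $v\edge w$ in $\mor{\g'}$ then either $v,w$ are already adjacent in $\g'$ — so $\alpha(v),\alpha(w)$ are adjacent in $\g$, hence in $\mor{\g}$ — or $v,w\in\parents{u}$ in $\g'$ for some $u$, in which case $\alpha(v),\alpha(w)\in\parents{\alpha(u)}$ (graph homomorphisms preserve the edges $v\to u$, $w\to u$), so again $\alpha(v)\edge\alpha(w)$ in $\mor{\g}$. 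Then I set $\mor{(\alpha,\eta)}\coloneqq(\mor{\alpha},\eta)$, reusing the \emph{same} distribution-preserving monoidal transformation $\eta\colon\omega\to\omega'$, which is legal since the first components of the objects of $\bn$ and $\mn$ live in the same place ($\cdsyn{\bullet}\to\finstoch$). It remains to confirm this is a well-defined morphism of $\mn$, i.e.\ that $\eta$ is still distribution-preserving relative to the Markov-network factorisations; but distribution-preservation is a condition on $\omega,\omega'$ alone (it only mentions $\eta_{\bullet}$, $\omega(\minitikzfig{state})$, $\omega'(\minitikzfig{state})$), so it transfers verbatim. Functoriality — $\mor{\id}=\id$ and $\mor{(\alpha',\eta')\comp(\alpha,\eta)}=\mor{(\alpha',\eta')}\comp\mor{(\alpha,\eta)}$ — then reduces to functoriality of $\mor{-}\colon\odag\to\ougr$ on the first component (which follows because $(-)\mapsto\mor{(-)}$ respects composition of graph homomorphisms, as the two clauses above are closed under composition) and to strict identity on the second component, where composition in both $\bn$ and $\mn$ is component-wise.

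The main obstacle is not any of the categorical bookkeeping above but the coherence underlying the object-level construction: one must check that the hypergraph functor $m\colon\syn{\mor{\g}}\to\syn{\g}$ of \eqref{eq:moralisation_functor} is genuinely well-defined, i.e.\ that the proposed assignment on generators respects the special Frobenius and (co)monoid equations \eqref{eq:hypergraph}, and that the resulting composite $q\comp\tilde F\comp m\comp{!_{\mor{\g}}}$ really equals $i\comp\omega\comp\star_{\bullet}$ so that diagram \eqref{eq:factoringdistribution_maintext} commutes. Concretely this is the statement that comparing all factor-outputs of the moralised network reproduces, up to the normalisation scalar absorbed by $q$, the product $\prod_{v}f_v(v\mid\parents{v})$ defining $\omega$ — which is exactly the classical fact that moralisation does not change the associated distribution, now rendered diagrammatically. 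Since $m$ is defined as the \emph{free} hypergraph functor on the generator assignment \eqref{eq:moralisation_functor}, well-definedness is automatic once the target is a bona fide hypergraph category, so the real content is the commutativity of \eqref{eq:factoringdistribution_maintext}; I would prove it by structural induction on the string diagram $\tilde{!}_{\g}(\minitikzfig{state})\in\syn{\g}$, using that each $\minitikzfig{parents_morph}$-generator gets hit by exactly the clique $\{v\}\cup\parents{v}$ while every other clique-generator is sent to $\minitikzfig{omniX}^{\otimes}$, which the compare-maps of $\syn{\g}$ then absorb without altering the value in $\mat$. I expect this inductive argument, together with checking the edge cases where a clique of $\mor{\g}$ is of the form $\{v\}\cup\parents{v}$ for more than one $v$, to be the most delicate part, and it is presumably where the deferred proof in \cref{sec:functoriality_proofs} spends most of its effort.
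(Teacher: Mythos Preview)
Your proposal is essentially correct and mirrors the paper's overall structure: the treatment of morphisms (lifting $\alpha$ to $\mor{\g'}\to\mor{\g}$ and reusing $\eta$ verbatim) is exactly what the paper does, and you correctly locate the substantive content in the commutativity of \eqref{eq:factoringdistribution_maintext}.

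The difference lies in how that commutativity is established. Rather than an ad-hoc structural induction on the string diagram, the paper isolates a single reusable lemma (\cref{lem:graphcopycomp}): for any $S\subseteq\Sigma_{\g}$ one has $\graph{\copycomp{S}}=\compcomp{\graph{S}}$ in $\syn{\g}$, proved once from the special Frobenius axioms. The argument then runs in four lines: $!_{\mor{\g}}$ sends the generator to $\compcomp{\Sigma_{\mor{\g}}}$; applying $m$ yields $\compcomp{\graph{\Sigma_{\g}}}$ (the omni-cliques are absorbed as units of compare); by the lemma this equals $\graph{\copycomp{\Sigma_{\g}}}$; and since $\copycomp{\Sigma_{\g}}$ has no inputs it coincides with its own graph, which is precisely $\tilde{!}_{\g}$ applied to the generator. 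This packaging is cleaner than a direct induction and, notably, the same lemma is reused verbatim in the triangulation proof, so the paper's route buys modularity that your induction would not.

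One small point: the ``edge case'' you flag---a clique of $\mor{\g}$ equal to $\{v\}\cup\parents{v}$ for more than one $v$---cannot occur. If $\{v\}\cup\parents{v}=\{w\}\cup\parents{w}$ with $v\neq w$, then each is a parent of the other, contradicting acyclicity of $\g$. So the clause in \eqref{eq:moralisation_functor} is unambiguous and no effort is spent there.
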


Conversely, in the theory of probabilistic graphical models, obtaining a Bayesian network from a Markov network involves a process known as \emph{triangulation}.

\begin{definition}
	Let $\h$ be an ordered undirected graph. Then its \newterm{triangulation} $\tr{\h}$ is the ordered graph where $v\to w$ whenever $v\le w$ and there is a path $v \edge w_1 \edge \dots \edge w_n =w$, with $w_i \ge w$ for each $i=1,\dots,n$.
\end{definition}
In particular, every edge $v\edge w$ in $\h$ yields a directed edge in $\tr{\h}$. The idea is that $\tr{\h}$ only enforces the conditional independencies that we know to hold thanks to $\h$.

As in the case of the moralisation, we want to capture triangulation as a functor, but in the opposite direction $\mn \to \bn$. The construction is similar: the key step is defining the functor $t\colon \syn{\tr{\h}} \to \syn{\h}$ which will act by precomposition on the given syntax category $\syn{\h}$ of the Markov network $\h$. It is defined on the generators of $\syn{\h}$ by the following clause
\begin{equation}\label{eq:triangulation}
	\scalebox{0.8}{\tikzfig{parents_morph}}\qquad \longmapsto \qquad \scalebox{0.8}{\tikzfig{triangulation_def}}
\end{equation}
where $\compcomp{C_v}$ is a \emph{compare-composition} (see Appendix~\ref{sec:graphhom_mn}), determined by comparing all the outputs of the set of morphisms $C_v\coloneqq \lbrace \phi \colon I\to C \mid v\in C\subseteq \parents{v} \cup \lbrace v \rbrace\rbrace \subseteq \Sigma_{\h}$, $P\coloneqq \parents{v} \cap C_v$ and $Q\coloneqq \parents{v} \setminus P$.
Intuitively, the right-hand side compares all the factors involving $v$ and its parents, while $Q$ accounts for the remaining inputs.
By showing that $\syn{\bullet}\to \syn{\h}$ factors through $\syn{\tr{\h}}$, it seems we have everything to conclude, since we have the following commutative diagram.
\[
\begin{tikzcd}[row sep=1pt,column sep=large]
	\cdsyn{\bullet}\ar[r,"!_{\tr{\h}}"]\ar[dd,"\omega"]&\cdsyn{\tr{\h}} \ar[ddl,dashed]\ar[r,"\star_{\tr{\h}}"]& \syn{\tr{\h}}\ar[dr,"t"]\ar[dd]&\\
	&&& \syn{\h}\ar[dl]\\
	\finstoch\ar[r,"i"] & \finprojstoch & \mat\ar[l,"q" above]&
\end{tikzcd}
\]
However, we need to ensure that $\syn{\tr{\h}} \to \mat$ yields the functor $\cdsyn{\tr{\h}}\to \finstoch$, dashed above, that preserves the associated distribution $\cdsyn{\bullet}\to \finstoch$. This is true for $\tr{\h}$, due to the peculiar structure of triangulation {(see \cref{sec:functoriality_proofs}, \cref{lem:triangulation_finstoch})}, which allows to conclude \cref{thm:triangulation} below. Interestingly, it is not necessarily true for generic DAGs, see \cref{rem:v-structures} below.

\begin{theorem}\label{thm:triangulation}
	Triangulation gives rise to a functor $\tr{-}\colon \mn \to \bn$ which on objects maps $(\omega,\h)$ to $(\omega,\tr{\h})$.
\end{theorem}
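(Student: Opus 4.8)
The plan is to build the functor in three stages — on objects, on morphisms, then functoriality — with essentially all of the substance isolated in one normalisation lemma and the remainder being bookkeeping.

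\emph{On objects.} Given $(\omega,\h)\in\mn$ the target is $(\omega,\tr{\h})\in\bn$, and two things need checking. First, $\tr{\h}$ is a legitimate ordered DAG: an edge $v\to w$ of $\tr{\h}$ forces $v<w$ in the order inherited from $\h$, so $\tr{\h}$ is acyclic and that order is topological for it. Second — the real point — $\omega$ must factor through $!_{\tr{\h}}\colon\cdsyn{\bullet}\to\cdsyn{\tr{\h}}$, as \cref{def:bn_cat} requires. For this I would introduce the hypergraph functor $t\colon\syn{\tr{\h}}\to\syn{\h}$ freely generated by clause \eqref{eq:triangulation}. One first checks $t$ is well defined by a type count — the inputs of $\compcomp{C_v}$ together with $Q$ reassemble $\parents{v}$ as computed in $\tr{\h}$, while the only remaining output is $v$, so the right-hand side has the type of the generator it replaces — and then verifies the diagrammatic identity that the canonical map $\syn{\bullet}\to\syn{\h}$ factors through $t$. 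The latter rests on the combinatorial fact that the sets $C_v$ partition $\clique{\h}$: every clique $C$ of $\h$ lies inside $\{v\}\cup\parents{v}$ for its top vertex $v=\max C$, and for no other vertex. Hence pushing the joint state of $\tr{\h}$ (composed from its conditional boxes) through $t$ compares precisely all the factors of $\h$, returning the joint state of $\h$. Feeding this into the square \eqref{eq:mn_factorisation} for $(\omega,\h)$ yields the commuting diagram displayed just before the statement.

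\emph{The crux.} That diagram only produces a candidate Bayesian structure valued in $\mat$ and only up to a global scalar. What remains is exactly \cref{lem:triangulation_finstoch}: the functor $\cdsyn{\tr{\h}}\to\mat$ built along the top of the diagram descends to a genuine CD-functor $\cdsyn{\tr{\h}}\to\finstoch$, and precomposing it with $!_{\tr{\h}}$ recovers $\omega$ on the nose rather than merely up to normalisation. I expect this to be the main obstacle. It is the one step that uses the defining property of triangulation — $\tr{\h}$ is chordal, $\{v\}\cup\parents{v}$ is a clique of $\tr{\h}$ for every $v$, and the order is a perfect elimination order — and I would prove it by replaying the correctness of variable elimination along that order: eliminating vertices from the top telescopes the product of clique factors into honest conditional probability tables whose product is $\omega$, and one then matches this factorisation with the image of $t$. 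As \cref{rem:v-structures} warns, the analogous claim fails for a general DAG, where a collider breaks the telescoping, so the argument cannot be purely formal and must exploit the combinatorics of $\tr{-}$. Granting the lemma, \cref{prop:irredundantbayesian_functor} certifies $(\omega,\tr{\h})\in\bn$.

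\emph{On morphisms and functoriality.} A morphism $(\alpha,\eta)\colon(\omega,\h)\to(\omega',\h')$ of $\mn$ is sent to $(\tr{\alpha},\eta)$, where $\tr{\alpha}\colon\tr{\h'}\to\tr{\h}$ agrees with $\alpha$ on vertices. That $\tr{\alpha}$ lies in $\odag$ is a path chase: $\alpha$ sends edges to edges and is order-preserving, so it carries a path witnessing an edge $v\to w$ of $\tr{\h'}$ to a walk — hence a path — witnessing $\alpha(v)\to\alpha(w)$ in $\tr{\h}$; the same computation shows $\tr{-}$ is itself a functor $\ougr\to\odag$, preserving identities and composites on the nose since only the vertex map is involved. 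The second component is literally $\eta$: being distribution-preserving is a condition on $\omega$ and $\omega'$ alone, with no reference to the graph, and \cref{def:bn_cat} imposes no compatibility between the two components of a morphism, so $(\tr{\alpha},\eta)$ is a bona fide morphism of $\bn$. Finally, since composition in both $\mn$ and $\bn$ is component-wise, functoriality of $\tr{-}\colon\mn\to\bn$ follows at once from functoriality of $\tr{-}$ on graphs and from the $\eta$-component being transported verbatim.
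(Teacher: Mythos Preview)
Your proposal is correct and follows essentially the same approach as the paper: the factorisation of $!_{\h}$ through $t$ via the partition $\{C_v\}_v$ of $\clique{\h}$, the isolation of the normalisation step as \cref{lem:triangulation_finstoch} proved by top-down variable elimination exploiting that parents of each vertex in $\tr{\h}$ form a clique, and the path-chase lifting $\alpha$ to $\tr{\alpha}$ all match the paper's argument. The only cosmetic omission is that the paper phrases the diagrammatic identity via the copy-/compare-composition calculus and \cref{lem:graphcopycomp}, and notes explicitly that $\alpha(v)$ and $\alpha(w)$ may coincide rather than be joined by an edge, but neither affects the substance.
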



\begin{remark}\label{rem:v-structures}
	Consider the the DAG $\g$ given by $\minitikzfig{v-structure}$.
	Any Bayesian network over $\g$ makes $A$ and $B$ independent of each other once we discard $C$.
	We claim that this independence is not necessarily true for a hypergraph functor $\syn{\g}\to \mat$.
	To this end, consider $A=B=C=\lbrace 0,1 \rbrace$, and let $\phi\colon I \to A\tensor B\tensor C$ be the morphism defined by $\phi(a,b,c)=1$ if exactly two of them are equal and 0 otherwise.
	We then define $\Phi \colon \syn{\g}\to \mat$ by sending the generators $I\to A$ and $I\to B$ to $\minitikzfig{omni}$ and $A\tensor B \to C$ to \minitikzfig{phiC.AB}. 
	In this way, $\phi$ corresponds to the distribution $\syn{\bullet}\xrightarrow{!} \syn{\g}\xrightarrow{\Phi} \mat$.
	By direct computations, $\minitikzfig{phi_noC}\neq \minitikzfig{indep}$, so indeed $A$ and $B$ share some dependence even when $C$ is discarded.
\end{remark}

\begin{example}\label{ex:misconception_triangulation}
The triangulation of the undirected graph $\h$ of \cref{ex:misconception} is the DAG obtained by making all edges direct and adding an edge $A\to C$ (because of the path $A\edge D \edge C$). The string diagram representing $\tr{\h}$ in $\syn{\tr{\h}}$ is given below left, with its image under the functor $t\colon \syn{\tr{\h}} \to \syn{\h}$ below right: 
\begin{equation*}
	\scalebox{0.7}{\tikzfig{misconception_triangulation}}
\end{equation*}
The equations in \eqref{eq:comonoids} and \eqref{eq:hypergraph} ensure that the right hand side correspond to the string diagram representing $\syn{\h}$ (see \cref{fig:markov_network_ex}). Now, the factors associated with the original network, defined in~\eqref{tab:misconception_factors}, yield a hypergraph functor $\Phi \colon \syn{\h}\to \mat$.
Postcomposing $t$ with $\Phi$ sets the following $\mat$-semantics for $\syn{\tr{\h}}$-generators: $f_D (D \given AC) \coloneqq \phi_D(D) \phi_{AD}(AD) \phi_{CD}(CD)$, $f_C (C \given AB) \coloneqq \phi_{BC}(BC) \phi_C(C)$, $f_B (B \given A)\coloneqq \phi_B(B) \phi_{AB}(AB)$ and $f_A(A)\coloneqq \phi_A(A)$. (For simplicity, here we use the same notation for the generators in the syntax categories and their image in $\mat$). 
Observe that, by definition, $f_C(C\given AB)$ does not really depend on $A$, but this additional input is forced by the triangulation. 
We shall see how this plays an important role in our next goal, which is to derive a functor $\cdsyn{\tr{\h}} \to \finstoch$ from $\syn{\tr{h}} \xrightarrow{t} \syn{\h} \xrightarrow{\Phi}\mat$ using a normalisation procedure (given in full generality in \cref{sec:functoriality_proofs}, \cref{lem:triangulation_finstoch}). 

Starting from the vertex $D$, we define the normalisation coefficient $\lambda_D$ which depends on $A$ and $C$, given by $\lambda_D(AC)=f_D(d\given AC) + f_D(\no{d}\given AC)$.
Therefore, $f_D (D \given AC)=\lambda_D(AC) g_D (D\given AC)$ for some stochastic matrix $g_D$ in $\finstoch$.
The idea is that $g_D$ will be the image of the generator $A\otimes C \to D$ in $\cdsyn{\tr{\h}}$.

As our aim is to obtain the same distribution associated to the network, we cannot dismiss the normalisation coefficient $\lambda_D(AC)$, and we therefore consider it together with $f_C(C\given AB)$, since $f_C$ is the only morphism where $C$ appears among the remaining $\lbrace f_A,f_B,f_C \rbrace$.
We now define a normalisation coefficient $\lambda_C (AB)= \lambda_D(A,c) \phi_{BC}(B,c) + \lambda_D(A,\no{c}) \phi_{BC}(B,\no{c})$ and obtain $\lambda_D(AC)f_C(C\given AB) = \lambda_C(AB) g_C(C\given AB)$, where $g_C$ is a stochastic matrix.
We note that $g_C$ does depend on $A$ although $f_C$ did not: in particular, the additional input of $f_C$ allows for the construction of $g_C$ without changing the type $A\otimes B \to C$.

Now, $\lambda_C(AB)$ depends on $B$, so we consider it with $f_B (B|A)$.
As above, we can define the normalisation coefficient $\lambda_B(A)$ and obtain $\lambda_C(AB)f_B(B\given A) = \lambda_B(A) g_B(B\given A)$ for some stochastic matrix $g_B$.
Similarly for $A$, we set a normalisation coefficient $\lambda_A$, which does not depend on any variable, such that $\lambda_B(A)f_A(A) = \lambda_A g_A$ for some stochastic matrix $g_A$.
%
By construction, the distribution $\prod_{v\in V_{\tr{\h}}} f_v (v\given \parents{v})$ associated to $ \syn{\tr{\h}}\to \mat$ is equal to $\lambda_A \prod_{v\in V_{\tr{\h}}} g_v(v\given \parents{v}) \propto \prod_{v\in\tr{\h}} g_v(v\given \parents{v})$, so the CD-functor $\cdsyn{\tr{\h}}\to \finstoch$ obtained by the family $\lbrace g_v\rbrace $ does preserve the distribution associated to the network.
The values resulting from this procedure are written below.
\begin{equation*}
{\begin{tabular}{|c|c|c|}
\hline
$A$ & $C$ & $g_D(d\given AC)$\\
\hline \hline
$a$ & $c$ & 0.5\\
\hline
$a$&$\no{c}$& 0.9999\\
\hline
$\no{a}$&$c$& 0.0001\\
\hline
$\no{a}$&$\no{c}$&0.5\\
\hline
\end{tabular}}
\hspace{0.3cm}
{\begin{tabular}{|c|c|c|}
\hline
$A$ & $B$ & $g_C(c\given AB)$\\
\hline \hline
$a$ & $b$ & 0.6666\\
\hline
$a$&$\no{b}$& 0.0002\\
\hline
$\no{a}$&$b$& 0.9998\\
\hline
$\no{a}$&$\no{b}$&0.3334\\
\hline
\end{tabular}}
\hspace{0.3cm}
{\begin{tabular}{|c|c|}
\hline
$A$ & $g_B(b\given A)$\\
\hline \hline
$a$ & 0.2307 \\
\hline
$\no{a}$& 0.8475\\
\hline
\end{tabular}}
\hspace{0.3cm}
{\begin{tabular}{|c|}
\hline
$g_A(a)$\\
\hline \hline
0.1806 \\
\hline
\end{tabular}}
\end{equation*}
\end{example}

We conclude by making some observation on the interaction of $\tr{-}$ and $\mor{-}$. 

\begin{proposition}\label{prop:trmor_mortr}
	There are two natural transformations: $\tr{\mor{-}} \to \id_{\bn}$ and $\mor{\tr{-}} \to \id_{\mn}$, where $\id_{-}$ indicates the identity functor.
\end{proposition}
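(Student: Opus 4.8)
The plan is to build both natural transformations by hand, taking as graph component the identity-on-vertices inclusion of a graph into its double transform, and as semantics component the identity. Recall from \cref{thm:moralisation,thm:triangulation} that $\mor{-}$ and $\tr{-}$ leave the distribution $\omega$ unchanged and (as set up in \cref{sec:functoriality_proofs}) act on a morphism by keeping its underlying vertex map. Hence, for the first transformation $\tr{\mor{-}} \to \id_{\bn}$, it suffices to provide, for each object $(\omega,\g)$ of $\bn$, a morphism $\nu_{(\omega,\g)}\colon (\omega,\tr{\mor{\g}})\to(\omega,\g)$. By \cref{def:bn_cat} this is a pair of an $\odag$-morphism $\g\to\tr{\mor{\g}}$ and a distribution-preserving monoidal transformation $\omega\to\omega$; for the latter we take the identity (trivially distribution-preserving and monoidal), and for the former we take $\iota_\g$, the identity on vertices.

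The first step is to check that $\iota_\g$ is a well-defined morphism of ordered DAGs, i.e.\ that it preserves edges (order-preservation being immediate, since source and target carry the same ordered vertex set). So suppose $v\to w$ in $\g$: since $\g$ is topologically ordered, $v<w$, and by \cref{def:moralisation} the edge $v\edge w$ survives in $\mor{\g}$; then the length-one path $v\edge w$, whose only endpoint $w$ satisfies $w\ge w$, witnesses $v\to w$ in $\tr{\mor{\g}}$. Thus $\nu_{(\omega,\g)}:=(\iota_\g,\id_\omega)$ is a morphism of $\bn$. For naturality, given $(\alpha,\eta)\colon(\omega,\g)\to(\omega',\g')$ in $\bn$ (so $\alpha\colon\g'\to\g$ in $\odag$), one verifies the square componentwise: on the semantics side both composites equal $\eta$ because the $\nu$-components are identities, and on the graph side both composites are $\odag$-morphisms $\g'\to\tr{\mor{\g}}$ whose underlying vertex map is that of $\alpha$ (using that $\tr{\mor{-}}$ acts identically on vertex maps and that a graph homomorphism is determined by its action on vertices).

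The second natural transformation $\mor{\tr{-}}\to\id_{\mn}$ follows the same recipe via \cref{def:mn_cat}: at $(\omega,\h)$ put $\mu_{(\omega,\h)}:=(\jmath_\h,\id_\omega)$, where $\jmath_\h\colon\h\to\mor{\tr{\h}}$ is the identity on vertices. The edge-preservation check now reads: if $v\edge w$ in $\h$, say with $v\le w$, then $v\to w$ in $\tr{\h}$ (every undirected edge becomes a directed edge under triangulation), hence $v\edge w$ in $\mor{\tr{\h}}$; the naturality verification is verbatim as before.

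The only genuine work is the bookkeeping above: verifying the two inclusions land in $\odag$ and $\ougr$ respectively---which rests on the elementary fact that moralisation and triangulation only add or reorient edges, never remove them---and confirming that, with the action of $\mor{-}$ and $\tr{-}$ on morphisms as in \cref{sec:functoriality_proofs}, the naturality squares collapse to trivial identities of vertex maps and of monoidal transformations. I expect no serious obstacle; the main subtlety worth flagging is merely that these transformations are not isomorphisms in general---consistently with the discussion of information loss and with \cref{rem:v-structures}---since $\tr{\mor{\g}}$ and $\mor{\tr{\h}}$ typically carry strictly more edges than $\g$ and $\h$.
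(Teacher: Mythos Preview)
Your proposal is correct and follows essentially the same approach as the paper: the component at each object is the pair consisting of the identity-on-vertices graph inclusion $\g\to\tr{\mor{\g}}$ (resp.\ $\h\to\mor{\tr{\h}}$) together with the identity monoidal transformation on $\omega$, and naturality holds because $\mor{-}$ and $\tr{-}$ act on morphisms by reinterpreting the same pair $(\alpha,\eta)$ in the new type. Your write-up is more explicit than the paper's (which simply asserts that the identity is a graph morphism and that naturality is immediate), but the argument is the same.
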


The actions of these transformations are very simple: in $(\alpha,\eta) \colon \tr{\mor{\omega,\g}} \to (\omega,\g)$, the contravariant $\alpha \colon \g \to \tr{\mor{\g}}$ is a graph embedding, and $\eta \colon \omega \to \omega$ is the identity; the transformation $\mor{\tr{-}} \to \id_{\mn}$ is similarly defined. A more interesting observation is that $\tr{\mor{\g}}$ may be regarded as a \emph{chordal} representation of $\g$ (see \cite[Definitions 2.24 and 2.25]{koller2009probabilistic}). Finally, the reader may wonder whether an adjunction may be achieved, but this is too much to ask as it would require to have either natural $\id_{\bn}\to \tr{\mor{-}}$ or $\id_{\mn}\to \mor{\tr{-}}$.
	However, this is not possible already at the level of graphs, because in general $\tr{\mor{\g}}$ (resp.\ $\mor{\tr{\h}}$) has more edges than $\g$ (resp.\ $\h$), and represents less independencies. {We leave the details to \cref{sec:functoriality_proofs}, \cref{prop:noadjunction}.}
\section{Conclusions}\label{sec:conclusions}

We provided a categorical framework for Bayesian and Markov networks, focussing on translations between them in the form of moralisation and triangulation. In the spirit of Lawvere's categorical algebra, we characterised networks as functors from a `syntax' (the graph) to a `semantics' (its probabilistic interpretation). The translations were formulated abstractly by functor pre-composition, and defined inductively on the diagrammatic syntax.

A first direction for future work is accounting for additional PGMs. Our separation of syntax and semantics offers a clear way forward characterising several existing models, including Gaussian networks (where a discrete probabilistic semantics is replaced with a Gaussian one), networks based on `partial DAGs' (where the syntax allows both for directed and undirected edges), hidden Markov models (where not all vertices of the graph are `visible' to the string diagram interfaces), and factor graphs (where the syntax describes bipartite graphs). It also abstracts and simplifies reasoning on translations between models, whose specification in the existing literature is often in natural language and prone to ambiguity.

A second direction is to use our abstract account of moralisation and triangulation to study the mathematical structure of the algorithms where these transformations are employed, such as junction tree algorithms, clique tree message passing,  variable elimination, and graph-based optimisation. Ultimately, the aim is to offer a compositional perspective on these algorithms, leveraging our syntactic description of moralisation and triangulation.

\bibliography{references}

\newpage



\appendix 
\section{On Bayesian networks}\label[app]{sec:graphhom_bn}
Our main aim for this section is to prove that $\cdsyn{}$ is a functor (\cref{thm:graphhom-cd}). 
We start with the explicit definition of the category of ordered DAGs.
\begin{definition}\label{def:odag}
	The category $\odag$ of ordered DAGs is defined as follows:
	\begin{itemize}
		\item Objects are ordered DAGs, i.e.\ DAGs equipped with a total order such that $v \to w$ implies $v<w$;
		\item Morphisms are order-preserving graph homomorphisms, i.e.\ $\alpha \colon \g_1 \to \g_2$ is a function on the sets of vertices $V_{\g_1}\to V_{\g_2}$ that preserves the order and the edges.\footnote{If two vertices $x$ and $y$ have the same image under $\alpha$, we assume that the possible edges $(x,y)$ and $(y,x)$ are respected.}
	\end{itemize}
\end{definition}

\begin{notation}
	For any morphism $\phi$ in a CD-category, let us denote by $\In{\phi}$ and $\Out{\phi}$ the sets of inputs and outputs.
	Similarly, for any set of morphisms $S$, we consider $\Out{S}\coloneqq \bigcup_{\phi \in S} \Out{\phi}$, whereas $\In{S} \coloneqq \bigcup_{\phi \in S} \In{\phi}\cap \Out{S}^c$.
\end{notation}

The reason of the asymmetric notation for $\In{S}$ is motivated by the following notion.
\begin{definition}
	Let $\g$ be an ordered DAG. 	
	Consider a set of generators $S\subseteq\Sigma_{\g}$ and let $\phi$ be the generator in $S$ whose output is the biggest element of $\Out{S}$.
	By induction, we define $\copycomp{\emptyset}\coloneqq \id_I$ and 
	\[
	\scalebox{0.8}{\tikzfig{copycomp_S}}\qquad \coloneqq \qquad \scalebox{0.8}{\tikzfig{copycomp_def}}
	\] 
	where $S'\coloneqq S \setminus \lbrace \phi \rbrace$, and $\pi\colon \In{S'} \cup \Out{S'}\to (\In{S'} \cup \Out{S'})\cap \In{\phi}$
	is the marginalization (i.e. it deletes all the occurring vertices that are not inputs of $\phi$).
	In the string diagram we omitted the permutations of the inputs to avoid using additional notation.
	For a given $S$, we refer to $\copycomp{S}$ as the \newterm{copy-composition} of $S$.
\end{definition}
By definition, if $S=\lbrace \phi \rbrace$, then $\copycomp{S}=\phi$. 

\begin{remark}\label{rem:smithe}
	Although similar, the copy-composition above differs from the one introduced by Smithe in \cite{smithe2024copycomposition}.
	Indeed, for our purposes, copy-composition must allow inputs and outputs to simply overlap without entirely matching, which extends beyond Smithe's original notion. 
	On the other hand, this relaxed notion requires additional care as everything depends on a specific order. 
	This is why we have stated it only for sets of generators $S \subseteq \Sigma_{\g}$,  making it specific to the syntax categories $\cdsyn{\g}$ (and $\syn{\g}\coloneqq \freehyp{V_{\g},\Sigma_{\g}}$).
\end{remark}

\begin{remark}[Semantics with the Copy-Composition]\label{rem:semantics_copycomp}
	Consider the DAG $\minitikzfig{dag_AB_1}$, and a CD-functor $F\colon \cdsyn{\g}\to \finstoch$. Set $\omega\coloneqq F(I\to A)$ and $f\coloneqq F(A \to B)$. 
	Then $F(\copycomp{\Sigma_{\g}}) = \minitikzfig{fcopyw}$, which is the probability distribution given by $A \tensor B \owns (a,b)\mapsto f(b\given a) \omega(a)$, where the occurrences of $a$ as output of $\omega$ and as input of $f$ are identified. 
	In other words, $\minitikzfig{fcopyw} = f(B\given A)\omega(A)$.

	In general, the interpretation of copy-composition, particularly when given a CD-functor $\cdsyn{\g}\to \finstoch$, is to identify all occurrences of the same variable. 
\end{remark}

\begin{lemma}\label{lem:copycomp}
	Let $S$ and $T$ be two sets of generators such that for each $\phi\in S$ and $\psi\in T$, $\Out{\phi} <\Out{\psi}$. Then 
	\[
		\scalebox{0.8}{\tikzfig{copycomp_ST}} \qquad = \qquad \scalebox{0.8}{\tikzfig{copycomp_SandT}}
	\]
	where $\pi$ marginalizes all the inputs and outputs of $S$ that are not inputs of $T$.
\end{lemma}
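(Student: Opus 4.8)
The statement is an associativity/decomposition property of copy-composition, so the natural strategy is induction on the size of $T$ (with $S$ fixed). The base case $T = \emptyset$ is immediate: $\copycomp{\emptyset} = \id_I$, the marginalisation $\pi$ deletes nothing that is an output of $S$, and both sides reduce to $\copycomp{S}$ (up to the trivial bookkeeping of which wires are kept). For the inductive step, let $\psi \in T$ be the generator whose output is the largest element of $\Out{T}$; since every $\phi \in S$ has $\Out{\phi} < \Out{\psi}$, this $\psi$ is also the generator realising the maximum of $\Out{S \cup T}$. Thus unfolding the definition of $\copycomp{S \cup T}$ peels off exactly this $\psi$, leaving $\copycomp{(S \cup T) \setminus \lbrace \psi \rbrace} = \copycomp{S \cup T'}$ where $T' \coloneqq T \setminus \lbrace \psi \rbrace$, together with a copy of the inputs of $\psi$ fed through the marginalisation onto $\In{\psi}$. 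The inductive hypothesis applies to $S$ and $T'$ (the hypothesis $\Out{\phi} < \Out{\psi'}$ still holds for all $\phi \in S$, $\psi' \in T'$), rewriting $\copycomp{S \cup T'}$ as $\copycomp{S}$ and $\copycomp{T'}$ glued along the appropriate marginalisation.

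Once both sides are expanded one level, the task is purely diagrammatic: show that the two resulting string diagrams coincide. On the left we have $\copycomp{S}$, $\copycomp{T'}$, and $\psi$, wired together by copy maps and marginalisations according to the single recursive unfolding of $\copycomp{S\cup T}$ followed by the IH. On the right we have $\copycomp{S}$ and $\copycomp{T}$ glued by $\pi$, where $\copycomp{T}$ is itself (by one unfolding of its own definition) $\copycomp{T'}$ and $\psi$ wired by copies and a marginalisation onto $\In{\psi}$. So the proof reduces to checking that "unfold $S\cup T$, then rewrite via IH" produces the same wiring as "unfold $T$ inside the right-hand side". This is a matter of pushing copy maps past each other and composing/merging the various marginalisation (delete) maps, using only the comonoid equations~\eqref{eq:comonoids} (coassociativity, cocommutativity, counitality) and functoriality of $\otimes$. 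I would set up careful notation for the index sets $\In{S}, \Out{S}, \In{T}, \Out{T}$ and their intersections with $\In{\psi}$, so that the bookkeeping of which occurrences of which variable get copied or deleted is explicit; the key algebraic fact used repeatedly is that a copy followed by a delete on one branch is the identity, and that copying commutes with itself coassociatively, so the order in which $S$-outputs, $T'$-outputs and the fresh $\psi$-inputs are redistributed does not matter.

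The main obstacle I anticipate is \emph{not} any deep categorical content but the combinatorial precision required to track the wire identifications across the recursion. Copy-composition here (per Remark~\ref{rem:smithe}) deliberately allows inputs and outputs to overlap partially rather than match exactly, and it is order-dependent, so the marginalisation maps $\pi$ on the two sides are a priori defined by different set-theoretic prescriptions; the real work is verifying they induce the same string diagram after the comonoid laws are applied. I would manage this by proving a small auxiliary observation first — essentially that $\copycomp{S\cup T}$, when the maximum output lies in $T$, can always be written as "$\copycomp{S}$ in parallel with $\copycomp{T}$, then copies identifying shared variables, then delete the non-$\In{T}$ leftovers of $S$" — which is exactly the right-hand side; this auxiliary claim is what the induction actually establishes, and phrasing it this way keeps the wire-tracking uniform at each recursive step. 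With that in hand the lemma follows immediately, and it is the form in which it will be used later (e.g. in the functoriality proofs for $\cdsyn{}$ and in handling compare-compositions on the Markov side).
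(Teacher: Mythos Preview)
Your proposal is correct and follows essentially the same route as the paper: induction on $T$, peel off the generator $\psi\in T$ with maximal output (which is also maximal in $S\cup T$), apply the inductive hypothesis to $S\cup T'$, and then recognise the resulting diagram as the right-hand side with $\copycomp{T}$ unfolded one step. The paper's own argument is terser---it presents the chain of string-diagram equalities directly and leaves the comonoid bookkeeping implicit---but the structure is identical.
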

\begin{proof}
	By induction, we can assume that statement holds for $T'\coloneqq T \setminus \lbrace \phi\rbrace$, where $\phi$ is the generator whose output is the biggest element. Then 
	\[
	\begin{aligned}
		\scalebox{0.8}{\tikzfig{copycomp_ST}}\quad &=\quad \scalebox{0.8}{\tikzfig{copycomp_STprime_phi}}\\
		&=\quad \scalebox{0.8}{\tikzfig{copycomp_S_Tprime_phi}}\\
		&=\quad \scalebox{0.8}{\tikzfig{copycomp_S_T}}
	\end{aligned}
	\]
	where $\pi=\pi_1 \otimes \pi_2$, $\pi_1$ only considers the inputs obtained from $S$ and $\pi_2$ those obtained from $T'$. Since the dashed box corresponds to $\copycomp{T}$, the statement is shown.
\end{proof}

\begin{notation}
	Let $\alpha\colon \g \to \g'$ be an order-preserving graph homomorphism, and let $v \in V_{\g'}$.
	We write $S_v^{\alpha}$, or $S_v$ when there is no confusion, for the set of generators $\phi \in \Sigma_{\g}$ such that $\Out{\phi} \subseteq \alpha^{-1}(v)$.
\end{notation}
\begin{remark}\label{rem:sv_properties}
	Let us highlight some important properties of $S_v$.
	\begin{enumerate}
		\item\label{it:sv_out} As every vertex in $V_{\g}$ is an output for some generator in $\Sigma_{\g}$, $\Out{S_v}=\alpha^{-1}(v)$.
		\item\label{it:sv_partition} The collection $\lbrace S_v \mid v\in \alpha(\g) \rbrace$ gives a partition of $\Sigma_{\g}$ because the same holds for the preimage.
		\item\label{it:sv_in_all} We have 
		\[
			\bigcup_{\phi \in S_v} \In{\phi} = \bigcup_{\phi \in S_v} \parents{\Out{\phi}} = \bigcup_{w \in \alpha^{-1}(v)} \parents{w} = \parents{\alpha^{-1}(v)},
		\]
		and therefore $\In{S_v}=\parents{\alpha^{-1}(v)}\cap \alpha^{-1}(v)^c$ by \cref{it:sv_out}. In particular, from this equality we infer that $\In{S_v}\subseteq \alpha^{-1}(\parents{v})$.
	\end{enumerate}
\end{remark}

We are now ready to define the association on morphisms. 
We will tacitly use \cref{rem:sv_properties} in the definition.

\begin{definition}\label{def:cdsyn_morph}
	Let $\alpha \colon \g \to \g'$ be an order-preserving graph homomorphism.
	We then define a functor $\cdsyn{\alpha}\colon \cdsyn{\g'}\to \cdsyn{\g}$ as follows.
	\begin{itemize}
		\item On objects, $v$ is sent to the tensor product given by $\alpha^{-1}(v)$ (and ordered according to the order on $\g$).
		\item On morphisms, $\cdsyn{\alpha}$ is given by 
		\[
			\scalebox{0.8}{\tikzfig{parents_morph}} \quad \longmapsto \quad \scalebox{0.8}{\tikzfig{synf_dags}}
		\]
		where $\In{S_v}^c\coloneqq \alpha^{-1}(\parents{v})\setminus \In{S_v}$. 
		As in the previous definition, we omit some permutation of objects to avoid additional notation.
	\end{itemize}
\end{definition}

\begin{lemma}\label{lem:copycomp2}
	Let $\alpha \colon \g \to \g'$ be an order-preserving graph homomorphism and let $T\subset\Sigma_{\g'}$. Then 
	\[
	\cdsyn{\alpha}\left(\scalebox{0.8}{\tikzfig{copycompT}}\right) \qquad =\qquad  \scalebox{0.8}{\tikzfig{synf_dags_lem}}
	\]
\end{lemma}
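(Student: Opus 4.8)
The statement of \cref{lem:copycomp2} describes how the functor $\cdsyn{\alpha}$ transforms a copy-composition: applying $\cdsyn{\alpha}$ to $\copycomp{T}$ for a set of generators $T \subseteq \Sigma_{\g'}$ yields, up to marginalisation and permutation, the copy-composition of the generators in $\bigcup_{v \in \Out{T}} S_v$ inside $\cdsyn{\g}$. The plan is to proceed by induction on the cardinality of $T$, mirroring the inductive definition of $\copycomp{-}$ and reusing the structural lemma \cref{lem:copycomp}.

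\emph{Base case.} When $T = \emptyset$, both sides are $\id_I$ (using that $\cdsyn{\alpha}$ is a CD-functor, hence monoidal, so it preserves the monoidal unit and its identity), and when $T = \lbrace \psi \rbrace$ is a singleton, $\copycomp{T} = \psi$ is a single generator $\minitikzfig{parents_morph}$, so $\cdsyn{\alpha}(\copycomp{T})$ is given directly by \cref{def:cdsyn_morph}; one then checks this coincides with $\copycomp{\bigcup_v S_v}$ composed with the appropriate marginalisation, which by unwinding \cref{rem:sv_properties}\ref{it:sv_in_all} matches the claimed right-hand side.

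\emph{Inductive step.} Write $\psi$ for the generator in $T$ whose output is the largest element of $\Out{T}$, and set $T' \coloneqq T \setminus \lbrace \psi \rbrace$. By the inductive definition, $\copycomp{T}$ decomposes as $\copycomp{T'}$ followed by a copy of the relevant wires and then $\psi$ (after a marginalisation $\pi$ onto $\In{\psi}$), as in the defining equation. Since $\cdsyn{\alpha}$ is a CD-functor it commutes with copy, delete, marginalisation (which is built from deletes), and sequential/parallel composition; so $\cdsyn{\alpha}(\copycomp{T})$ decomposes as $\cdsyn{\alpha}(\copycomp{T'})$ — to which the induction hypothesis applies, rewriting it as the copy-composition of $\bigcup_{v \in \Out{T'}} S_v$ up to marginalisation — followed by a copy and then $\cdsyn{\alpha}(\psi)$, and $\cdsyn{\alpha}(\psi)$ is itself (by \cref{def:cdsyn_morph}) a copy-composition of $S_{\Out{\psi}}$ up to marginalisation and the $\In{S_v}^c$ bookkeeping. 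It remains to merge these two copy-compositions into a single one over $\bigcup_{v \in \Out{T}} S_v = \left(\bigcup_{v \in \Out{T'}} S_v\right) \cup S_{\Out{\psi}}$. This is exactly the situation handled by \cref{lem:copycomp}: because every generator in $S_v$ has output in $\alpha^{-1}(v)$, and the ordering on $\g$ is compatible with that on $\g'$ under the order-preserving map $\alpha$, all outputs coming from $\bigcup_{v \in \Out{T'}} S_v$ are smaller than those coming from $S_{\Out{\psi}}$, so \cref{lem:copycomp} glues them and absorbs the intermediate marginalisation into the single outer $\pi$.

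\emph{Main obstacle.} The real work is bookkeeping: tracking precisely which wires get marginalised (the $\In{S_v}^c$ sets and the $\pi$ in \cref{lem:copycomp}), and verifying that the hypothesis of \cref{lem:copycomp} — strict comparability of outputs between the two families — genuinely holds here. This reduces to the fact that $\alpha$ is order-preserving and that $\psi$ was chosen to carry the maximal output of $T$, so that for $v \in \Out{T'}$ and $w = \Out{\psi}$ one has $v < w$ in $\g'$, hence $\alpha^{-1}(v)$ sits entirely below $\alpha^{-1}(w)$ in the order on $\g$; consequently every output in $\bigcup_{v \in \Out{T'}} S_v$ is strictly below every output in $S_w$. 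Once this is checked, the permutations and marginalisations match up by construction, and the equation follows.
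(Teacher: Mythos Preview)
Your proposal is correct and follows essentially the same approach as the paper: induction on $|T|$, peeling off the generator with maximal output, applying the induction hypothesis to $T'$, and then invoking \cref{lem:copycomp} to merge the two copy-compositions. Your write-up is considerably more detailed than the paper's (in particular, you spell out why the order hypothesis of \cref{lem:copycomp} is met, via order-preservation of $\alpha$), but the underlying argument is the same.
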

\begin{proof}
	By induction, let us assume the statement is true for $T'= T \setminus\lbrace \phi\rbrace$, where $\phi$ is the generator whose output is the biggest element. Then
	\[
	\begin{aligned}
		\cdsyn{\alpha}\left(\scalebox{0.8}{\tikzfig{copycompT}}\right) \quad &=\quad \cdsyn{\alpha}\left(\scalebox{0.8}{\tikzfig{copycomp_Tprime_phi}}\right)\\
		&=\quad \scalebox{0.8}{\tikzfig{copycomp_Tprimephi_im}}
	\end{aligned}
	\] 
	and we conclude by \cref{lem:copycomp}.
\end{proof}

\begin{proof}[Proof of \cref{thm:graphhom-cd}]
	We want to prove that the association 
	\[
	\begin{array}{rccc}
		\cdsyn{} \colon& \odag & \to & \cdcat\\
		& \g,\quad \alpha \colon \g \to \g' &\mapsto & \cdsyn{\g},\quad \cdsyn{\alpha}\colon \cdsyn{\g'}\to \cdsyn{\g}
	\end{array}
	\]
	where $\cdsyn{\alpha}$ is defined as in \cref{def:cdsyn_morph}, is a contravariant functor.
	We note that indeed $\cdsyn{\id}=\id$, so we are left to prove composition.
	Let $\alpha\colon \g_1 \to \g_2$ and $\beta\colon \g_2\to \g_3$. 
	On objects, $\cdsyn{\alpha}\cdsyn{\beta}(v) = \cdsyn{\beta\alpha} (v)$ holds by properties of the preimage and the fact that $\beta$ and $\alpha$ are order-preserving.
	
	Let us now consider a generator $\minitikzfig{parents_morph}$ in $\cdsyn{\g_3}$. 
	By \cref{lem:copycomp2}, it suffices to show that $\bigcup_{\phi \in S_v^{\beta}} {S_{\Out{\phi}}^{\alpha}} = {S_v^{\beta\alpha}}$. But this follows immediately from \cref{rem:sv_properties}:
	\[
		\bigcup_{\phi \in S_v^{\beta}} S_{\Out{\phi}}^{\alpha} = \bigcup_{w \in \beta^{-1}(v)} S_w^{\alpha} = \lbrace \phi \mid \Out{\phi} \subseteq \alpha^{-1}(w) \text{ for some }w \in \beta^{-1}(v) \rbrace = {S_v^{\beta\alpha}}.
	\]
\end{proof}

\begin{remark}[Factorising Distributions]\label{rem:factorising_distributions_DAG}
	An important morphism is the contraction $\g \to \bullet$, which we often use in the main text. 
	Via the functor $\cdsyn{}$, this contraction gives the CD-functor $!_{\g}\colon \cdsyn{\bullet}\to \cdsyn{\g}$, which sends the single generator $I\to \bullet$ to $\copycomp{\Sigma_{\g}}$.

	When considering a CD-functor $F\colon \cdsyn{\g}\to \finstoch$, the distribution $\omega$ associated to $\cdsyn{\bullet}\to \cdsyn{\g}\to \finstoch$ is then defined by $F(\copycomp{\Sigma_{\g}})$.
	Since CD-functors preserve $\minitikzfig{copy}$ and $\minitikzfig{del}$, we have a factorisation of $\omega$. 
	More explicitly, set $f_v\coloneqq F(\minitikzfig{parents_morph})$.
	Then, $\omega(V_{\g})= \prod_{v} f_v(v\given \parents{v})$, where we identified all occurrences of the same variable, as discussed in \cref{rem:semantics_copycomp}.
\end{remark}

\begin{proof}[Proof of \cref{prop:irredundantbayesian_functor}]
	We note that whenever we work with free categories, functors can be freely defined by describing where the generating objects and morphisms are sent. 
	
	Let $P$ be an irredundant Bayesian network over $\g$, and consider the CD-functor $\cdsyn{\bullet}\to \finstoch$ given by sending $\bullet$ to $X$ and $\minitikzfig{state}$ to the probability distribution $P$.
	By definition, $P$ factorises as a product $\prod_v f_v$ for some stochastic matrices $f_v$ and some assignment $\tau$. 
	We define $F\colon \cdsyn{\g}\to \finstoch$ by sending $v\mapsto \tau(v)$ and the generator $\minitikzfig{parents_morph}$ to $f_v$. 
	\cref{rem:factorising_distributions_DAG} is sufficient to ensure that the CD-functor $\cdsyn{\bullet} \to \finstoch$ given by $P$ indeed factorises through $!_{\g}$.
	
	Conversely, if $\cdsyn{\bullet}\to \finstoch$ admits a factorisation through $!_{\g}$, the image $P$ of $I\to \bullet$ can be rewritten using the images $f_v$ of the generators $\minitikzfig{parents_morph}$ again by \cref{rem:factorising_distributions_DAG}, and so we obtain an irredundant Bayesian network. 
	As the two constructions are clearly the inverse of one another, the wanted bijection holds.
\end{proof}

\section{On Markov networks}\label[app]{sec:graphhom_mn}

\begin{proof}[Proof of \cref{prop:markov_functor}]
	Whenever we have a Markov network, we can define the hypergraph functor on generators by setting $
	F(v)\coloneqq \tau(v)$ and $F\left(\minitikzfig{clique_morph}\right) \coloneqq \phi_C$.
	Conversely, given a hypergraph functor $F \colon \syn{\mathcal{H}} \to \mat$, one simply uses the definitions above in the converse direction: $
	\tau(v) \coloneqq F(v)$ and $\phi_C \coloneqq F\left(\minitikzfig{clique_morph}\right)$.
	As the two functions defined are clearly the inverse of one another, the statement follows.
\end{proof}

We now discuss some preliminaries to prove \cref{thm:graphhom-hyp}.

\begin{definition}
	The category $\ougr$ of ordered undirected graphs is defined as follows:
	\begin{itemize}
		\item Objects are ordered undirected graphs;
		\item Morphisms are order-preserving graph homomorphisms, i.e.\ $\alpha \colon \h_1 \to \h_2$ is a function on the sets of vertices $V_{\h_1}\to V_{\h_2}$ that preserves the order and the edges.\footnote{If two vertices $x$ and $y$ have the same image under $f$, we assume that the possible edge $\lbrace x,y\rbrace$ is respected.}
	\end{itemize}
\end{definition}

Similarly to \cref{sec:graphhom_bn}, we need to define what it means to consider several morphisms together. 
Importantly, as here we do not need to take care of directionality, this definition can be extended to a general situation, which will be important in \cref{sec:functoriality_proofs}.
\begin{definition}
	A morphism $\phi \colon I\to C$ in a hypergraph category is a \newterm{factor} if it comes equipped with a sequence of objects $(X_1 ,\dots , X_n)$ such that $X_1 \tensor \cdots \tensor X_n=C$. 
	
	We write $\Out{\phi}\coloneqq \lbrace X_1, \dots, X_n\rbrace$, and, more generally, $\Out{S}\coloneqq \bigcup_{\phi \in S} \Out{\phi}$ for a finite set of factors $S$. 
\end{definition}

A finite set of factors $S$ is assumed to come with an ordering of $\Out{S}$ to avoid the swapping issue discussed in \cref{rem:ordering}.

In the categories $\syn{\g}$ and $\syn{\h}$, every morphism $I\to C$ is tacitly assumed to become a factor by considering the vertices of the associated graph. In particular, every set of factors in these categories comes with an induced ordering.

\begin{definition}
	For every finite set of factors $S$ in a hypergraph category, we define by induction $\compcomp{\emptyset}=\id_I$ and
	\[
		\scalebox{0.8}{\tikzfig{compare_comp}} \qquad \coloneqq \qquad \scalebox{0.8}{\tikzfig{compare_comp_def}}
	\]
	where $\phi$ is arbitrarily chosen and $S'\coloneqq S \setminus \lbrace \phi \rbrace$. Permutations of the outputs on the right hand side are omitted for brevity. 
	For a given $S$, we refer to $\compcomp{S}$ as the \newterm{compare-composition} of $S$.
\end{definition}
\begin{lemma}\label{lem:compcomp}
	The compare-composition is well-defined, i.e. it does not depend on the choice of $\phi$. 
	Moreover, for any disjoint sets of factors $S$ and $T$,
	\[
		\scalebox{0.8}{\tikzfig{compare_comp_ST}}\qquad =\qquad \scalebox{0.8}{\tikzfig{compare_comp2}}	
	\]
\end{lemma}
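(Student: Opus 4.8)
Looking at the statement of Lemma~\ref{lem:compcomp}, we need to prove two things: the compare-composition is well-defined (independent of the choice of $\phi$ at each inductive step), and it behaves well with respect to disjoint unions.

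The plan is as follows. For well-definedness, I would proceed by induction on $|S|$. The base cases $|S| \le 1$ are trivial. For the inductive step, suppose $\phi$ and $\psi$ are two distinct choices of element from $S$. I would compare $\compcomp{S}$ computed via $\phi$ first versus via $\psi$ first. Write $S_1 = S \setminus \{\phi\}$ and $S_2 = S \setminus \{\psi\}$. Using the inductive hypothesis on $S_1$, I can further decompose $\compcomp{S_1}$ by pulling out $\psi$ first, and symmetrically decompose $\compcomp{S_2}$ by pulling out $\phi$ first; each then leaves $\compcomp{S \setminus \{\phi,\psi\}}$, which is well-defined by induction. Having reduced to the case of two factors $\phi, \psi$ sitting on top of a common remainder, the two expressions differ only by the order in which $\phi$ and $\psi$ are compared into the shared wires. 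This is then settled purely by the commutativity, associativity, and unitality of the `compare' monoid together with the Frobenius/naturality laws of~\eqref{eq:hypergraph} — essentially a coherence argument showing that comparing a bunch of factors into a shared bundle of wires is symmetric in the factors.

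For the second claim (the disjoint union identity), I would again use induction, this time on $|T|$, together with the well-definedness just established. If $T = \emptyset$ the statement reduces to $\compcomp{S} = \compcomp{S} \otimes \id_I$, which holds by $\compcomp{\emptyset} = \id_I$ and strictness. For the inductive step, pick $\psi \in T$ and set $T' = T \setminus \{\psi\}$. By well-definedness I may compute $\compcomp{S \cup T}$ by removing $\psi$ last, i.e. form $\compcomp{(S \cup T')}$ first and then compare in $\psi$; apply the inductive hypothesis to rewrite $\compcomp{S \cup T'}$ as (a suitable combination of) $\compcomp{S}$ and $\compcomp{T'}$, since $S$ and $T'$ are still disjoint; then compare in $\psi$, which since $S$ and $T$ are disjoint only interacts with the $\compcomp{T'}$ part (on the overlapping outputs) and with fresh wires otherwise, reassembling $\compcomp{T}$ on the $T$-side. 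The bookkeeping of which output wires overlap between $S$ and $T$ versus which are fresh is handled by the Frobenius laws, exactly as in the analogous \cref{lem:copycomp} for copy-composition.

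The main obstacle I expect is not any single deep fact but the diagrammatic bookkeeping: keeping track of which output objects are shared between the various factors and which are not, and making sure the permutations of outputs that were "omitted for brevity" in the definition are handled consistently. The genuinely load-bearing step is the two-factor symmetry lemma inside the well-definedness proof — showing that comparing $\phi$ and then $\psi$ into a common set of wires equals comparing $\psi$ and then $\phi$ — which is where commutativity of the compare-monoid and the special Frobenius equations of~\eqref{eq:hypergraph} do the real work. Everything else is induction plus the inductive hypothesis plus strictness.
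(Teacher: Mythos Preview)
Your proposal is correct and matches the paper's approach: the paper's own proof is the single sentence ``The proof is omitted as it immediately follows from associativity and commutativity of the compare morphism,'' and your induction is precisely the scaffolding that makes this sentence rigorous. One minor point: the special Frobenius equations of~\eqref{eq:hypergraph} are not actually needed here---only the commutative monoid axioms for \minitikzfig{compareX} are used, since compare-composition never involves the comonoid---so your ``load-bearing step'' is purely a monoid coherence argument.
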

The proof is omitted as it immediately follows from associativity and commutativity of the compare morphism.

\begin{remark}[Semantics with the Compare-Composition]\label{rem:semantics_compcomp}
	\cref{rem:semantics_copycomp} can be translated to this setting, meaning that the compare-composition identifies all occurrences of the same variable on the different factors. 
	For example, take two factors $\phi\colon I \to A \tensor B$ and $\psi\colon I \to A \tensor C$ in $\mat$. 
	Then $\compcomp{\lbrace \phi,\psi\rbrace}$ is given by the factor $A\tensor B\tensor C\owns (a,b,c)\mapsto \phi(a,b)\psi(a,c)$, also written as $\phi(AB)\psi(AC)$.
\end{remark}

\begin{definition}\label{def:syn_morph}
	Let $\alpha \colon \h \to \h'$ be an order-preserving graph homomorphism. The hypergraph functor $\syn{\alpha}\colon \syn{\h'} \to \syn{\h}$ is defined as follows:
	\begin{itemize}
	\item It maps an object $v$ to the tensor product given by $\alpha^{-1}(v)$.
	\item On morphisms, 
	\[
		\scalebox{0.8}{\tikzfig{clique_morph}}\qquad \longmapsto \qquad \scalebox{0.8}{\tikzfig{synf_ugr}}
	\]
	where $S_C\coloneqq \lbrace\phi \colon I \to D \mid \alpha(D)=C\rbrace$.
	\end{itemize}
\end{definition}
\begin{lemma}\label{lem:compcomp2}
	Let $\alpha \colon \h \to \h'$ be an order-preserving graph homomorphism and let $T\subset \Sigma_{\h'}$. Then 
	\[
	\syn{\alpha}\left(\scalebox{0.8}{\tikzfig{compcompT}}\right) \qquad =\qquad  \scalebox{0.8}{\tikzfig{synf_ugr_lem}}
	\]
\end{lemma}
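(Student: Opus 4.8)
The plan is to transcribe the argument for \cref{lem:copycomp2}, proceeding by induction on $|T|$, while taking advantage of the fact that the undirected setting is more symmetric: by \cref{lem:compcomp} the compare-composition is independent of the order in which the factors are combined, so we may peel off an \emph{arbitrary} generator of $T$ rather than, as in the directed case, the one with the largest output, and there is no marginalisation to keep track of.

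For the base case $T=\emptyset$ both sides equal $\id_I$, since $\compcomp{\emptyset}=\id_I$, the relevant union is empty, and hypergraph functors preserve identities. For the inductive step, fix any $\phi\in T$ and set $T'\coloneqq T\setminus\lbrace\phi\rbrace$. Unfolding the definition of compare-composition, $\compcomp{T}$ is obtained from $\compcomp{T'}$ and $\phi$ by placing them in parallel and applying a compare map on each object lying in both $\Out{T'}$ and $\Out{\phi}$. Since $\syn{\alpha}$ is a hypergraph functor it preserves compare and omni maps (and the monoidal product), so $\syn{\alpha}(\compcomp{T})$ is built in the same way from $\syn{\alpha}(\compcomp{T'})$ and $\syn{\alpha}(\phi)$. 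By the inductive hypothesis the first is the compare-composition of $\bigcup_{\psi\in T'}S_{\Out{\psi}}$, with the interface completed by omni maps on the vertices of $\alpha^{-1}(\Out{T'})$ not covered by these factors, exactly as prescribed by \cref{def:syn_morph}; and by the definition of $\syn{\alpha}$ on the single generator $\phi$ the second is the compare-composition of $S_{\Out{\phi}}$, similarly completed.

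It remains to fuse these two compare-compositions. Because $\Sigma_{\h'}$ has exactly one generator per clique, distinct $\psi,\phi\in T$ have $\Out{\psi}\neq\Out{\phi}$, hence the sets $S_{\Out{\psi}}=\lbrace\chi\colon I\to D\mid \alpha(D)=\Out{\psi}\rbrace$ and $S_{\Out{\phi}}$ are disjoint, and so are $\bigcup_{\psi\in T'}S_{\Out{\psi}}$ and $S_{\Out{\phi}}$. Applying the second part of \cref{lem:compcomp} to these disjoint sets of factors collapses the nested compare-composition into $\compcomp{\bigcup_{\phi\in T}S_{\Out{\phi}}}$, with the omni maps of the two sides combining into the single family required by \cref{def:syn_morph}; this is precisely the right-hand side of the displayed equation, which closes the induction.

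The only genuinely delicate point is this last interface bookkeeping: one must check that the omni maps introduced on the uncovered vertices of $\alpha^{-1}(\Out{T'})$ and of $\alpha^{-1}(\Out{\phi})$ recombine correctly — a vertex covered by a factor from one side should not be left with a spurious omni coming from the other — and that the suppressed permutations of outputs agree on both sides. All of this is forced by the commutativity, associativity and unitality packaged into \cref{lem:compcomp} together with the special Frobenius equations \eqref{eq:hypergraph}, so it amounts to routine diagram manipulation; the structural content of the proof is the short induction above.
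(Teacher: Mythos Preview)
Your proposal is correct and follows exactly the approach the paper indicates: induction on $|T|$ using \cref{lem:compcomp}, mirroring the proof of \cref{lem:copycomp2}. The paper's own proof is a single sentence to this effect, so you have in fact supplied the details the paper omits, including the disjointness of the $S_{\Out{\psi}}$ and the omni-map bookkeeping.
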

This is proven by induction using \cref{lem:compcomp}, following the approach of the proof of \cref{lem:copycomp2}.

\begin{proof}[Proof of \cref{thm:graphhom-hyp}]
	We aim to prove that the association 
	\[
	\begin{array}{rccc}
		\syn{} \colon& \ougr & \to & \hypcat\\
		& \h,\quad \alpha \colon \h \to \h' &\mapsto & \syn{\h},\quad \syn{\alpha}\colon \syn{\h'}\to \syn{\h}
	\end{array}
	\]
	where $\syn{\alpha}$ is defined in \cref{def:syn_morph}, is a contravariant functor.
	A direct check shows that $\syn{\id}=\id$, so we focus on composition. Let $\alpha\colon \h_1 \to \h_2$ and $\beta\colon \h_2 \to \h_3$.
	As in the proof of \cref{thm:graphhom-cd}, $\syn{\alpha}\comp \syn{\beta}(v)=\syn{\beta\alpha}(v)$ because the considered graph homomorphisms are order-preserving.

	Regarding composition, let $\phi \colon I \to C$ in $\syn{\h_3}$. By \cref{lem:compcomp2}, it suffices to show that $\bigcup_{\phi \in S_C^{\beta}} S_{\Out{\phi}}^{\alpha} = S_C^{\beta\alpha}$, where $S_Y^{\ell}\coloneqq \lbrace \psi \colon I \to X \mid \ell(X)=Y \rbrace$. 
	By unpacking the definition,
	\[
	\bigcup_{\phi \in S_C^{\beta}} S_{\Out{\phi}}^{\alpha} = \lbrace \psi \colon I \to D \,\given\,  \exists\,  E \in \clique{\h_2}\text{ such that } \alpha(D)=E\text{ and } \beta(E)=C \rbrace.
	\] 
	Since graph homomorphisms send cliques to cliques, whenever $D$ is a clique, $\alpha(D)$ is a clique as well. 
	Therefore, the condition on the right can be shortened to $\beta\alpha(D)=C$, so the wanted equality $\bigcup_{\phi \in S_C^{\beta}} S_{\Out{\phi}}^{\alpha} = S_C^{\beta\alpha}$ indeed holds.
\end{proof}

\begin{remark}[Factorising Distributions]\label{rem:factorising_distributions_UGr}
	Similarly to \cref{rem:factorising_distributions_DAG}, we consider the contraction map $\h \to \bullet$, which via $\syn{}$ yields the hypergraph functor $!_{\h}\colon \syn{\bullet}\to \syn{\h}$ sending the single generator $I\to \bullet$ to $\compcomp{\Sigma_{\h}}$.

	We now consider a hypergraph functor $\Phi\colon \syn{\h}\to \mat$ and set $\phi_C\coloneqq \Phi(\minitikzfig{clique_morph})$.
	The distribution $\omega$ associated to $\syn{\bullet}\to \syn{\h}\to \mat$ is then defined by $\Phi(\compcomp{\Sigma_{\h}})$, and since hypergraph functors preserve $\minitikzfig{compare}$, we have a factorisation $\omega(V_{\h})= \prod_{C} \phi_C(C)$, where we identified all occurrences of the same variable, as discussed in \cref{rem:semantics_compcomp}.
\end{remark}

\begin{proof}[Proof of \cref{prop:irredundantmarkov_functor}]
	Commutativity of \eqref{eq:mn_factorisation} is exactly requiring that 
	\begin{equation}\label{eq:Q}
		\begin{tikzcd}
			\cdsyn{\bullet} \ar[r,"\star"]\ar[rrr,bend left=20, "Q"] & \syn{\bullet}\ar[r,"!"] &  \syn{\h} \ar[r,"\Phi"] &\mat\ar[r,"q"] &\finprojstoch
		\end{tikzcd}
	\end{equation}
	is equal to $\cdsyn{\bullet}\xrightarrow{\omega} \finstoch \xrightarrow{i} \finprojstoch$. 
	As already discussed in the main text, the latter compositions with $q$ and $i$ respectively are simply to describe that $Q\propto \omega$, which holds if and only if there exists a normalisation coefficient $Z$ such that $\omega = \frac{1}{Z}Q$ (to prevent notation overload, here we use $Q$ and $\omega$ both as the functors and their associated distributions $Q(\minitikzfig{state})$ and $\omega(\minitikzfig{state})$).
	By \cref{rem:factorising_distributions_UGr}, we have that $Q(V_{\h}) = \prod_C \phi_C (C)$ with $\phi_C\coloneqq \Phi(\minitikzfig{clique_morph})$ because of the factorisation, so we conclude that $\omega(V_{\h}) = \frac{1}{Z} \prod_C \phi_C (C)$, and thus $\omega$ is an irredundant Markov network.

	Conversely, if $\omega$ is an irredundant Markov network, then we can find an assignment $\tau$ and factors $\phi_C\colon I \to C$ such that $\omega(V_{\h})= \frac{1}{Z} \prod_C \phi_C(C)$, where $Z$ is a normalisation coefficient.
	Then, we define $\Phi\colon \syn{\h}\to \mat$ by sending $v$ to $\tau(v)$ and $\minitikzfig{clique_morph}$ to $\phi_C$.
	Moreover, $\omega$ gives rise to a CD-functor $\cdsyn{\bullet}\to \finstoch$ by $\minitikzfig{state}\mapsto \omega$.
	Then the commutativity of \eqref{eq:mn_factorisation} holds as it corresponds to $\omega(V_{\h})\propto \prod_C \phi_C(C)$, and this is true by definition of an irredundant Markov network.
\end{proof}

\section{Functoriality of the Moralisation and Triangulation Functors}\label[app]{sec:functoriality_proofs}
In this section we prove \cref{thm:moralisation,thm:triangulation}. 
To this end, it is important to show the connection between the copy and the compare composition. 

\begin{definition}
	Let $\cC$ be a hypergraph category. Given a morphism $f\colon X \to Y$, its \newterm{graph} is defined as $\minitikzfig{graph_f_def}\coloneqq \minitikzfig{fcapXY}$.
	For a finite set of morphisms $S$, we also write $\graph{S}$ for the set of graphs.
\end{definition}

\begin{lemma}\label{lem:graphcopycomp}
	Let $\g$ be an ordered DAG and let $S\subseteq \Sigma_{\g}$ be a set of generators.
	Then, in $\syn{\g}$, we have $\graph{\copycomp{S}}=\compcomp{\graph{S}}$.
\end{lemma}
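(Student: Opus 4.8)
The natural approach is induction on the size of $S$, mirroring the structure of the proofs of \cref{lem:copycomp,lem:copycomp2}. The base case $S=\emptyset$ is immediate: $\copycomp{\emptyset}=\id_I$, its graph is $\id_I$ as well (since bending no wires does nothing), and $\compcomp{\emptyset}=\id_I$ by definition, so $\graph{\copycomp{\emptyset}}=\compcomp{\graph{\emptyset}}$ trivially. For the inductive step, write $S = S' \cup \lbrace \phi \rbrace$ where $\phi$ is the generator whose output is the largest element of $\Out{S}$, so the inductive hypothesis gives $\graph{\copycomp{S'}}=\compcomp{\graph{S'}}$ in $\syn{\g}$.

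The core of the argument is a string-diagrammatic calculation in $\syn{\g}$ relating two operations. On one side, I unfold $\copycomp{S}$ according to its recursive definition: it is built from $\copycomp{S'}$, a marginalisation $\pi$ onto the inputs of $\phi$, a copy of the shared wires, and then $\phi$ itself placed on top; taking the graph of this composite bends all of $\Out{S}$ (the combined outputs of $S'$ and $\phi$) from outputs to a state-shaped morphism $I \to \Out{S}$. On the other side, $\compcomp{\graph{S}} = \compcomp{\graph{S'} \cup \lbrace \graph{\phi}\rbrace}$ unfolds (by the definition of compare-composition, using \cref{lem:compcomp} to pick $\graph{\phi}$ as the distinguished factor) into $\compcomp{\graph{S'}}$ and $\graph{\phi}$ glued along their shared objects via $\minitikzfig{compareX}$. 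Using the inductive hypothesis to replace $\compcomp{\graph{S'}}$ by $\graph{\copycomp{S'}}$, the task reduces to a local identity: that bending the outputs of the composite `$\phi$ after $\pi$-marginalise-and-copy applied to $\copycomp{S'}$' equals gluing $\graph{\copycomp{S'}}$ to $\graph{\phi}$ along $\minitikzfig{compareX}$. This is precisely the statement that, in a hypergraph category, `copy then feed forward into $f$, then cap the outputs' coincides with `graph of the source, compared against graph of $f$ on the shared legs' — i.e.\ the interaction between the Frobenius structure (bending wires), the comonoid (copy), and the monoid (compare), together with the snake equations of \eqref{eq:hypergraph}. The key sub-identity is the familiar one $\minitikzfig{fcapXY}$-style manipulation showing $\graph{g \comp f}$ decomposes via a compare on the intermediate wires, plus the fact that a deleted/marginalised wire turns under capping into an $\minitikzfig{omniX}$, which is absorbed as the unit of compare.

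The main obstacle I anticipate is purely bookkeeping rather than conceptual: the copy-composition is defined only up to omitted permutations of wires and depends delicately on the chosen order (cf.\ \cref{rem:smithe}), so matching up the interface objects on both sides — ensuring that the wires bent by $\graph{-}$ on the left correspond exactly to the wires compared on the right, in the right order — requires care. I would handle this by appealing to \cref{lem:copycomp} and \cref{lem:compcomp} to reorganise both compositions into a canonical shape (splitting $\Out{S}$ as $\Out{S'}$ together with $\Out{\phi}$, and the inputs accordingly), so that the permutations cancel and only the genuine Frobenius/bialgebra rewriting remains. Once the diagrams are aligned, the equality follows from the special Frobenius axioms and the snake equations of \eqref{eq:comonoids}–\eqref{eq:hypergraph}, completing the induction.
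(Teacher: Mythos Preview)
Your approach is correct and matches the paper's own proof, which simply states that the result follows by induction using the special Frobenius equations~\eqref{eq:hypergraph}. One small slip in your write-up: the graph operation $\graph{-}$ bends the \emph{inputs} $\In{S}$ of $\copycomp{S}$ up via a cap to produce a state $I \to \In{S}\otimes\Out{S}$, not the outputs as you wrote; this does not affect the structure of your argument.
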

In the statement we explicitly mention $\syn{\g}$ to point out that we cannot consider the case of CD-categories, where the compare-composition is not defined.
The proof is by induction, and it follows from the special Frobenius equations~\eqref{eq:hypergraph}.

\begin{remark}\label{rem:star_G}
For each ordered DAG $\g$, we have a CD-functor $\star_{\g} \colon \cdsyn{\g}\to \syn{\g}$ given by sending each generator to itself (recall that $\syn{\g}\coloneqq \freehyp{V_{\g},\Sigma_{\g}}$). 
In particular, the family $(\star_{\g})_{\g}$ is a natural transformation when we intepret $\cdsyn{}$ and $\syn{}$ as functors $\odag \to \cat{CDcat}$.

As already stated in the main text, any semantics CD-functor $F\colon \cdsyn{\g} \to \finstoch$ yields a hypergraph functor $\tilde{F}\colon \syn{\g}\to \mat$ defined on generators in the same way as $F$. 
We note that by definition, this modification satisfies the commutative diagram
\[
\begin{tikzcd}
	\cdsyn{\g} \ar[r,"\star_{\g}"]\ar[d,"F"] & \syn{\g}\ar[d,"\tilde{F}"]\\
	\finstoch \ar[r,hook] & \mat
\end{tikzcd}
\]
With the same strategy, we define $\tilde{!}_{\g}\colon \syn{\bullet}\to \syn{\g}$, the hypergraph functor such that $\cdsyn{\bullet}\xrightarrow{\star} \syn{\bullet}\xrightarrow{\tilde{!}_{\g}} \syn{\g}$ coincides with $\cdsyn{\bullet} \xrightarrow{!_{\g}}\cdsyn{\g} \xrightarrow{\star_{\g}} \syn{\g}$.
\end{remark}

\begin{proof}[Proof of \cref{thm:moralisation}]
	To ensure that the moralisation gives rise to a functor, we first prove that $\tilde{!}_{\g}\colon \syn{\bullet}\to \syn{\g}$ factors through $m\colon \syn{\mor{\g}}\to \syn{\g}$.
	Recall that $m$, defined in~\eqref{eq:moralisation_functor}, sends $\minitikzfig{clique_morph}$ to the graph $\minitikzfig{graph_parents_morph}$ whenever $C=\lbrace v \rbrace \cup \parents{v}$, and to $\minitikzfig{omni}$ otherwise.
	
	We study the composition $\syn{\bullet}\xrightarrow{!_{\mor{\g}}} \syn{\mor{\g}}\xrightarrow{m} \syn{\g}$. 
	By definition, $!_{\mor{\g}}$ sends the unique generator $I \to \bullet$ to $\compcomp{\Sigma_{\mor{\g}}}$.
	Before discussing how $m$ acts on $\compcomp{\Sigma_{\mor{\g}}}$, since the moralisation connects parents, $\lbrace v \rbrace \cup \parents{v}$ is a clique in $\mor{\g}$ for every $v\in V_{\g}$, i.e.\ all $\minitikzfig{graph_parents_morph}$ are in the image of $m$.
	We conclude that $m(\compcomp{\Sigma_{\mor{\g}}})=\compcomp{\graph{\Sigma_{\g}}}$, because the compare morphisms are preserved by hypergraph functors. 
	By \cref{lem:graphcopycomp}, $\compcomp{\graph{\Sigma_{\g}}}= \graph{\copycomp{\Sigma_{\g}}}$.
	Moreover, $\copycomp{\Sigma_{\g}}$ has trivial input, so it coincides with its graph:  $\compcomp{\graph{\Sigma_{\g}}}= \copycomp{\Sigma_{\g}}$.
	On the other side, $\tilde{!}_{\g}\colon \syn{\bullet}\to \syn{\g}$ sends $I\to \bullet$ to $\copycomp{\Sigma_{\g}}$, so the wanted factorisation holds.

	Now, whenever we have a Bayesian network $(\omega,\g)$, consider some $F\colon \cdsyn{\g}\to \finstoch$ factorising $\omega$. Then the associated Markov network $(\omega,\mor{\g})$ is obtained by the commutative diagram (already appeared in the main text)
	\begin{equation}
		\begin{tikzcd}
			\syn{\mor{\g}}\ar[r,"m"]& \syn{\g} \ar[r,"\tilde{F}"] &\mat\ar[dr,"q"]&  \\
			\syn{\bullet} \ar[u,"!_{\mor{\g}}"]\ar[ru,"\tilde{!}_{\g}"] &\cdsyn{\g}\ar[u,"\star_{\g}"]\ar[dr,"F"] && \finprojstoch\\
			& \cdsyn{\bullet} \ar[lu,"\star"]\ar[r,"\omega"]\ar[u,"!_{\g}"] & \finstoch\ar[uu,hook]\ar[ur,"i"]&
		\end{tikzcd}
	\end{equation}
	where we used \cref{rem:star_G} as well as the fact that $\finstoch \hookrightarrow \mat \xrightarrow{q} \finprojstoch$ coincides with $i$ by direct check.
	
	We are left to discuss how the moralisation acts on morphisms. 
	Let $\alpha \colon \g' \to \g$ be an order-preserving graph homomorphism. 
	Then $\alpha$ can be seen as an order-preserving graph homomorphism $\mor{\g'}\to \mor{\g}$, as stated in the main text. 
	This is easily checked: every edge $v\edge w$ in $\mor{\g'}$ either comes from an edge of $\g'$, so it is obviosly preserved by $\alpha$, or it is given by a certain vertex $u$ such that $v\to u$ and $w\to u$. Since $\alpha$ sends edges to edges, $\alpha(v)$ and $\alpha(w)$ are parents (or coincide with) $\alpha(u)$. 
	This means that we have an edge $\alpha(v)\edge \alpha(w)$ in $\mor{\g}$ (or $\alpha(v)=\alpha(w)$).
	We can then define the moralisation functor on morphisms by sending $(\alpha,\eta)\colon (\omega,\g)\to (\omega',\g')$ to $(\alpha,\eta)\colon (\omega,\mor{\g})\to (\omega',\mor{\g'})$.
\end{proof}

\begin{proof}[Proof of \cref{thm:triangulation}]
	Analogously to the previous proof, we start by showing that $\syn{\bullet}\to\syn{\h}$ factors through $t\colon \syn{\tr{\h}}\to \syn{\h}$ as defined in~\eqref{eq:triangulation}. 

	By definition, $\tilde{!}_{\tr{\h}}\colon \syn{\bullet}\to \syn{\tr{\h}}$ sends the unique generator $I\to\bullet$ to the copy-composition $\copycomp{\Sigma_{\tr{\h}}}$.
	As we noted in the previous proof, $\copycomp{\Sigma_{\tr{\h}}}=\graph{\copycomp{\Sigma_{\tr{\h}}}}$ because it has trivial input. Therefore, by \cref{lem:graphcopycomp}, $\copycomp{\Sigma_{\tr{\h}}} = \compcomp{\graph{\Sigma_{\tr{\h}}}}$. 
	The image of this morphism via $\syn{\tr{\h}}\to \syn{\h}$ is then given by $\compcomp{\bigcup_{v \in V_{\tr{\h}}}\compcomp{C_v}}$, where $C_v 
	= \lbrace \phi \colon I\to C \mid v\in C\subseteq \parents{v} \cup \lbrace v \rbrace\rbrace \subseteq \Sigma_{\h} $.
	Here, the additional tensoring seemingly needed with $\minitikzfig{omniQ}$ (where $Q$ is defined as in~\eqref{eq:triangulation}) can be disregarded because for every vertex $v$, the fact that $\lbrace v \rbrace\in C_v$ ensures that $v$ is always considered in $\compcomp{\bigcup_{v \in V_{\tr{\h}}}\compcomp{C_v}}$.
	Moreover, $\compcomp{\bigcup_{v \in V_{\tr{\h}}}\compcomp{C_v}} = \compcomp{\bigcup_{v \in V_{\tr{\h}}} C_v}$ by \cref{lem:compcomp} because the sets $C_v$ are all disjoint: if a clique $D$ belongs to $C_v\cap C_w$, then $v,w \in D$ and both of them are parents of each other, which contradicts the fact that $\tr{\h}$ is a DAG.

	We claim that $\bigcup_{v \in V_{\tr{\h}}}{C_v}=\Sigma_{\h}$, where $\subseteq$ holds by definition. 
	Let us write $\bigcup_{v \in V_{\tr{\h}}}{C_v}$ more explicitly:
	\[
		\bigcup_{v \in V_{\tr{\h}}}{C_v}= \left\lbrace \phi\colon I \to C \given \exists\, v\in V_{\tr{\h}} \text{ such that }v\in C\subseteq \parents{v}\cup\lbrace v\rbrace \right\rbrace.
	\]
	Now, given any clique $C\in \clique{\h}$, let us consider the biggest element $v\in C$. 
	Then for any element $w \in C$, there must exist an edge $w \edge v$ because $C$ is a clique.
	By assumption, $w\le v$, and therefore $w\to v$ in $\tr{\h}$. 
	We conclude that $v \in C\subseteq \parents{v}\cup\lbrace v\rbrace$, so $\phi \colon I \to C$ belongs to $C_v$.  
	As claimed, $\bigcup_{v \in V_{\tr{\h}}}{C_v}=\Sigma_{\h}$, and therefore 
	$\syn{\bullet}\to \syn{\tr{\h}}\to \syn{\h}$ sends $I\to \bullet$ to $\compcomp{\Sigma_{\h}}$.
	Since $!_{\h}\colon \syn{\bullet}\to \syn{\h}$ is defined by the same assignment, we have shown that $!_{\h}$ indeed factorises through $t$.
	Finally, any $(\omega,\h)\in \mn$ leads to $\omega$ being factored through $\syn{\tr{\h}}$ in the following way:
	\[
	\begin{tikzcd}[row sep=1pt,column sep=large]
		\cdsyn{\bullet}\ar[r,"!_{\tr{\h}}"]\ar[dd,"\omega"]&\cdsyn{\tr{\h}} \ar[r,"\star_{\tr{\h}}"]& \syn{\tr{\h}}\ar[dr,"t"]\ar[dd]&\\
		&&& \syn{\h}\ar[dl]\\
		\finstoch\ar[r,"i"] & \finprojstoch & \mat\ar[l,"q" above]&
	\end{tikzcd}
	\]
	However, we want to show that this is sufficient to ensure the existence of $\cdsyn{\tr{\h}}\to \finstoch$ factorising $\cdsyn{\bullet}\to \finstoch$. This is the content of \cref{lem:triangulation_finstoch} below, which therefore allows us to say that the triangulation sends $(\omega,\h)\in \mn$ to $(\omega,\tr{\h})\in \bn$.

	As in the proof of \cref{thm:moralisation}, to describe how triangulation acts on morphisms, we first prove that every order-preserving graph homomorphism $\alpha\colon \h'\to \h$ can be interpreted as an order-preserving graph homomorphism $\tr{\h'}\to \tr{\h}$. 
	The fact that the graph homomorphism is order-preserving ensures that this is true, indeed whenever $v\to w$ in $\tr{\h'}$, then there exists a path $v\edge w_1 \edge \dots \edge w_n=w$ with $w_i \ge w$. As both of these properties are preserved by $\alpha$, the same is true for $\alpha(v)$ and $\alpha(w)$, and therefore they either coincide or there exists a directed edge $\alpha(v)\to \alpha(w)$, as wanted. 
	This ensures that the triangulation functor can be defined on morphisms by sending $(\alpha,\eta)\colon (\omega,\h)\to (\omega',\h')$ to $(\alpha,\eta)\colon (\omega,\tr{\h})\to (\omega',\tr{\h'})$.
\end{proof}

The ordered DAG $\tr{\h}$ is such that for all vertices $u,v,w\in V_{\g}$, 
\begin{equation}\label{eq:triangulatedDAG}
	u,v\to w\text{ and }u\le v\qquad \implies \qquad u\to v.
\end{equation}
This is immediate as $u,v\to w$ ensures two paths with $w$
\[
u \edge w_1 \edge \dots \edge w_n = w = w'_m \edge \dots \edge w'_1\edge v,
\]
where $w_i,w'_j \ge w$, so $u\to v$.
\begin{lemma}\label{lem:triangulation_finstoch}
	Let $\g$ be an ordered DAG satisfying~\eqref{eq:triangulatedDAG}.
	If the following diagram 
	\[
	\begin{tikzcd}
		\cdsyn{\bullet}\ar[r,"!_{\g}"]\ar[d,"\omega"]&\cdsyn{\g} \ar[r,"\star_{\g}"]& \syn{\g}\ar[d,"F"]\\
		\finstoch\ar[r,"i"] & \finprojstoch & \mat\ar[l,"q" above]
	\end{tikzcd}
	\]
	commutes, where $F$ is a hypergraph functor, then there exists a factorisation $\cdsyn{\bullet}\xrightarrow{!_{\g}} \cdsyn{\g} \xrightarrow{G} \finstoch$ of $\omega$ for some CD-functor $G$.
\end{lemma}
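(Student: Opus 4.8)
The plan is to run the iterative renormalisation sketched in \cref{ex:misconception_triangulation}, organised as an induction on the number of vertices of $\g$. First I would unpack the hypothesis. Write $\tau(v)\coloneqq F(v)$ and let $f_v\in\mat$ be the image under $F\circ\star_{\g}$ of the generator of $\cdsyn{\g}$ associated to $v$. The $\mat$-analogue of \cref{rem:factorising_distributions_DAG} identifies $F\circ\star_{\g}\circ!_{\g}$ with the functor sending the unique generator of $\cdsyn{\bullet}$ to the factor $Q(V_{\g})=\prod_{v}f_v(v\given\parents{v})$. Commutativity of the square says $q(Q)=i(\omega)$ in $\finprojstoch$, i.e.\ $Q\propto\omega$; since $\omega$ is a genuine probability distribution this gives $\omega=\tfrac1Z Q$ with $Z=\sum_{V_{\g}}Q>0$ (here, as in the paper, I write $\omega$ both for the functor and for the distribution it names).

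For the base case $|V_{\g}|=1$ there is a single generator and $Q=f_v$, so I let $G$ send it to $\omega$. For the inductive step, let $v$ be the largest vertex of $\g$ and renormalise $f_v$: set $\lambda_v(\parents{v})\coloneqq\sum_{v}f_v(v\given\parents{v})$, so that $f_v=\lambda_v\cdot g_v$ for some stochastic matrix $g_v\colon\bigotimes_{w\in\parents{v}}\tau(w)\to\tau(v)$ in $\finstoch$, defining $g_v$ arbitrarily (say uniformly) on the inputs, if any, where $\lambda_v$ vanishes --- the identity still holds there since $f_v$ vanishes too. The one place the hypothesis~\eqref{eq:triangulatedDAG} enters: if $\parents{v}\neq\emptyset$ and $u$ is its largest element, then $\parents{v}\setminus\{u\}\subseteq\parents{u}$, because every $u'\in\parents{v}$ with $u'\neq u$ has $u',u\to v$ and $u'\le u$, hence $u'\to u$. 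Thus the variables of $\lambda_v$ lie in $\parents{v}\subseteq\parents{u}\cup\{u\}$, so $h\coloneqq\lambda_v\cdot f_u$ is again a morphism $\bigotimes_{w\in\parents{u}}\tau(w)\to\tau(u)$ in $\mat$; the residual $\lambda_v$ is absorbed into $f_u$ without changing its type. (If $\parents{v}=\emptyset$, $\lambda_v$ is a scalar, absorbed into the global normalisation, and the rest simplifies.)

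Now set $\g'\coloneqq\g\setminus\{v\}$; since $v$ is the largest vertex the parent sets of the remaining vertices are unchanged, $\g'$ is an ordered DAG, and it inherits~\eqref{eq:triangulatedDAG}. Let $F'\colon\syn{\g'}\to\mat$ be the hypergraph functor agreeing with $F$ on objects, sending the generator of $u$ to $h$ and every other generator to its $f_w$; its associated factor is $Q'(V_{\g'})=h(u\given\parents{u})\prod_{w\neq v,u}f_w(w\given\parents{w})$, and by construction $Q=g_v\cdot Q'$. Summing out $v$ and using that $g_v$ is stochastic gives $\sum_v Q=Q'$, hence $Z=\sum_{V_{\g'}}Q'$ and $\tfrac1Z Q'=\sum_v\omega$ is a probability distribution; viewing it as a CD-functor $\cdsyn{\bullet}\to\finstoch$, the hypothesis of the Lemma holds for $(\g',F',\tfrac1Z Q')$. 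By induction there is a CD-functor $G'\colon\cdsyn{\g'}\to\finstoch$ (agreeing with $F'$ on objects) with $G'\circ!_{\g'}=\tfrac1Z Q'$. Define $G\colon\cdsyn{\g}\to\finstoch$ to agree with $F$ on objects, send the generator of $v$ to $g_v$, and send every other generator to its image under $G'$; freeness of $\cdsyn{\g}$ makes this a well-defined CD-functor. Then $G\circ!_{\g}$ sends the unique generator of $\cdsyn{\bullet}$ to $g_v(v\given\parents{v})\cdot(G'\circ!_{\g'})=g_v\cdot\tfrac1Z Q'=\tfrac1Z(g_v\cdot Q')=\tfrac1Z Q=\omega$, completing the induction.

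The main obstacle is exactly the inductive step's bookkeeping: one must know into which surviving factor the residual $\lambda_v$ can be reabsorbed without enlarging its type, which is precisely what~\eqref{eq:triangulatedDAG} guarantees --- for a general DAG it fails (cf.\ \cref{rem:v-structures}), which is why a genuine triangulation, rather than an arbitrary DAG, is needed for \cref{thm:triangulation}. Everything else --- that $\g'$ inherits~\eqref{eq:triangulatedDAG}, that $F'$ is well defined, that the object assignments match throughout, and the treatment of inputs where $\lambda_v=0$ --- is routine.
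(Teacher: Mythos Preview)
The proposal is correct and follows essentially the same approach as the paper: induction on the number of vertices, normalising the factor $f_v$ of the largest vertex to a stochastic $g_v$, using~\eqref{eq:triangulatedDAG} to absorb the residual $\lambda_v$ into the factor of the largest parent without changing its type, and recursing on $\g\setminus\{v\}$. Your write-up is, if anything, a bit more explicit than the paper's about the functor-level bookkeeping (defining $F'$, $G'$, $G$), though the parenthetical ``agreeing with $F'$ on objects'' is not literally forced by the induction hypothesis as you stated it---it holds because the marginal $\tfrac1Z Q'$ already lives on $\bigotimes_{w\neq v}\tau(w)$, so any factorising $G'$ must respect this; you might make that explicit or simply strengthen the induction statement.
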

This result is the only one that uses tools from the standard theory.
However, the proof can be modified and rewritten from the viewpoint of categorical probability by using the notions of conditionals and normalisations~\cite{dilavore2024partial,lorenz2023causalmodels}.
\begin{proof}
	For the sake of simplicity, we use \cref{rem:semantics_copycomp}, as well as the discussion in the proof of \cref{prop:irredundantmarkov_functor} on page \pageref{eq:Q}, to translate the statement to a factorisation of probability distributions.
	In this way, the statement reads as follows: Let $\g$ be an ordered DAG satisfying \eqref{eq:triangulatedDAG} and consider a probability distribution $\omega$ and a factorisation $\omega(V_{\g})=\frac{1}{Z} \prod_{v\in V_{\g}} f_v (v\given \parents{v})$, where $f_v$ belongs to $\mat$. Then $\omega$ admits a factorisation $\omega = \prod_{v\in V_{\g}} g_v (v\given \parents{v})$ where $g_v$ is a stochastic matrix.

	We proceed by induction. Let us assume that the statement holds true for all graphs with $n-1$ vertices, and prove it for graphs with $n$ vertices.
	
	Let us take the biggest element $v\in V_{\g}$ and consider 
	\[ 
	\lambda_v(\parents{v}) \coloneqq \sum_{x\in v} f_v (x\given \parents{v})\quad \text{ and }\quad g_v(v\given \parents{v})\coloneqq \begin{cases}
			\frac{f_v (v\given \parents{v})}{\lambda_v(\parents{v})} & \text{if }\lambda_v(\parents{v}) \neq 0\\
			\operatorname{unif}_v & \text{otherwise}
		\end{cases}
	\]
	where $\operatorname{unif}_v$ is the uniform probability on $v$ (this can actually be substituted by any probability distribution). 
	In particular, with this choice $g_v$ is a stochastic matrix, i.e.\ it belongs to $\finstoch$.

	We note that $f_v (v\given \parents{v})= \lambda_v (\parents{v}) g_v(v\given \parents{v})$. 
	By assumption, if we take the biggest parent $w$ of $v$, then $\parents{v}\setminus\lbrace w \rbrace \subseteq \parents{w}$.
	In particular, we can define
	\begin{gather*}
		f'_w (w\given \parents{w})\coloneqq \lambda_v (\parents{v}) f_w (w\given \parents{w}),\qquad \text{and} \\
		f'_u(u \given \parents{u})\coloneqq f_u (u\given \parents{u})\qquad \text{for all }u\in V_{\g}\setminus \lbrace v,w\rbrace
	\end{gather*}
	Let $\g'$ be the graph obtained from $\g$ by deleting $v$. From the factorisation $\omega(V_{\g})=\frac{1}{Z}\prod_{u \in V_{\g'}} f'_u(u\given \parents{u}) g_v(v \given \parents{v})$, a direct check shows that $\frac{1}{Z}\prod_{u \in V_{\g'}} f'_u(u\given \parents{u})$ is equal to the marginalization $\omega'$ of $\omega$.
	Since $\g'$ satisfies~\eqref{eq:triangulatedDAG} because $\g$ does, we can apply the induction hypothesis to show that $\omega'$ can be factorised by stochastic matrices.
	Since $g_v(v\given \parents{v})$ is also a stochastic matrix, the statement is also true for $\omega$, as wanted.
\end{proof}

\begin{proof}[Proof of \cref{prop:trmor_mortr}]
	The two natural transformations are simply achieved by noting that the identity on an ordered DAG $\g$ and an ordered undirected graph $\h$ are also morphisms $\g\to \tr{\mor{\g}}$ and $\h\to\mor{\tr{\h}}$. 
	That these correspond to natural transformation follows because the functors $\mor{-}$ and $\tr{-}$ send the morphism $(\alpha,\eta)$ to the same pair intepreted in the new type, as stated at the end of the proofs of \cref{thm:moralisation,thm:triangulation}.
\end{proof}

\begin{proposition}\label{prop:noadjunction}
	There is no adjunction given by $\mor{-}$ and $\tr{-}$.
\end{proposition}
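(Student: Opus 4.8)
The plan is to rule out both possible adjunctions, $\mor{-}\dashv\tr{-}$ and $\tr{-}\dashv\mor{-}$, by showing that in each case the unit natural transformation that such an adjunction necessarily carries cannot exist --- already for reasons visible at the level of graphs. Recall that $\mor{-}$ and $\tr{-}$ fix the vertex set of a graph together with its order (\cref{def:moralisation} and the definition of $\tr{-}$), so that on objects $\tr{\mor{(\omega,\g)}}=(\omega,\tr{\mor{\g}})$ and $\mor{\tr{(\omega,\h)}}=(\omega,\mor{\tr{\h}})$, and that the graph component of a morphism in $\bn$ or $\mn$ is an order-preserving graph homomorphism pointing in the opposite direction.

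The first ingredient I would set up is a family of ``vertex-picking'' morphisms. For every $(\omega,\g)\in\bn$ and every $v\in V_\g$ there is a morphism $(\omega,\g)\to(\omega_v,\bullet)$ whose graph component is the homomorphism $\bullet\to\g$ with image $v$ and whose distribution-preserving component is the marginalisation of $\omega$ onto $v$; here $\omega_v$ is that marginal, which is an irredundant Bayesian network over the one-vertex graph since every distribution is (\cref{prop:irredundantbayesian_functor}), and the marginalisation is a stochastic matrix. The identical construction, using \cref{prop:irredundantmarkov_functor}, produces vertex-picking morphisms $(\omega,\h)\to(\omega_v,\bullet)$ in $\mn$.

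Assume now $\mor{-}\dashv\tr{-}$, with unit $\eta\colon\id_\bn\to\tr{\mor{-}}$, and write $\alpha_{(\omega,\g)}\colon\tr{\mor{\g}}\to\g$ for the graph component of $\eta_{(\omega,\g)}$. Instantiating naturality of $\eta$ on the vertex-picking morphism $(\omega,\g)\to(\omega_v,\bullet)$, and using that $\tr{\mor{\bullet}}=\bullet$ forces $\alpha_{(\omega_v,\bullet)}=\id$, one obtains $\alpha_{(\omega,\g)}(v)=v$ for every vertex $v$; hence $\alpha_{(\omega,\g)}$ is the identity on vertices, so being a graph homomorphism it would require $E_{\tr{\mor{\g}}}\subseteq E_\g$. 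This already fails for the v-structure $\g=(A\to C\leftarrow B)$ ordered by $A<B<C$: moralisation adds the edge $A\edge B$ and triangulation orients it as $A\to B$, which is not an edge of $\g$. In the other case, $\tr{-}\dashv\mor{-}$ with unit $\id_\mn\to\mor{\tr{-}}$, the same naturality argument forces the graph component $\mor{\tr{\h}}\to\h$ to be the identity on vertices, hence $E_{\mor{\tr{\h}}}\subseteq E_\h$; but for the four-cycle $\h=(A\edge B\edge C\edge D\edge A)$ ordered by $A<B<C<D$, triangulation introduces the chord $A\to C$ (via the path $A\edge D\edge C$) and moralisation retains $A\edge C$, which is not an edge of $\h$. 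Either way we reach a contradiction, so no adjunction between $\mor{-}$ and $\tr{-}$ exists.

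The step I expect to require the most care is the variance bookkeeping: morphisms in $\bn$ and $\mn$ carry their graph component contravariantly, so one must verify precisely that naturality of $\eta$ against the vertex-picking morphisms pins down the graph component of $\eta_{(\omega,\g)}$ itself (and not of a neighbouring component) to be the vertex-wise identity. The remaining checks are routine: that a single-vertex marginal is an irredundant network over $\bullet$, and that marginalisation is a distribution-preserving monoidal transformation, being built from $\del$ maps and hence stochastic.
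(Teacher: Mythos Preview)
Your proposal is correct and follows essentially the same approach as the paper: both arguments use the vertex-picking morphisms $(\omega,\g)\to(\omega_v,\bullet)$ together with naturality of the putative unit to force the graph component of $\eta_{(\omega,\g)}$ to be the identity on vertices, and then exhibit a graph (the v-structure, respectively the four-cycle) for which the identity is not a homomorphism $\tr{\mor{\g}}\to\g$ or $\mor{\tr{\h}}\to\h$. Your treatment is in fact slightly more explicit than the paper's, which only sketches the second direction; your caution about the contravariant bookkeeping is well placed but, as you anticipate, the check goes through.
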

\begin{proof}
	By contradiction, let us assume that a natural transformation $\mu_{(\omega,\g)}\colon (\omega,\g)\to (\omega,\tr{\mor{\g}})$ exists. 
	Recall that the functors $\mor{-}$ and $\tr{-}$ are `identity-like', i.e.\ they send $(\alpha,\eta)$ to $(\alpha,\eta)$ intepreted in the new type. 
	For any $(\omega,\g)$ and any vertex $v \in \g$, let us consider $(\omega_v,\bullet)$ where $\omega_v$ is the marginalization of $\omega$ at $v$.
	The marginalization gives rise to a distribution-preserving monoidal transformation $\pi\colon \omega \to \omega_v$, so in particular we have a morphism $(i,\pi)\colon (\omega,\g)\to (\omega_v,\bullet)$ where $i$ is the inclusion of graphs $\bullet \to \g$ sending $\bullet$ to $v$.
	By assumption,
	\[
	\begin{tikzcd}[column sep=large]
		(\omega_v,\bullet)\ar[r,"\mu_{(\omega_v,\bullet)}"]\arrow[d,"{(i,\pi)}"] & (\omega_v,\bullet)\arrow[d,"{(i,\pi)}"]\\
		(\omega,\g)\ar[r,"\mu_{(\omega,\g)}"] & (\omega,\tr{\mor{\g}})
	\end{tikzcd}
	\]
	commutes.
	At the level of graphs, $\mu_{(\omega_v,\bullet)}$ must be given by the identity, as it is the only morphism $\bullet \to \bullet$. 
	Therefore, whatever $\alpha \colon \tr{\mor{\g}}\to \g$ represents $\eta_{(\omega,\g)}$ at the level of graphs, it must respect the commutation of the diagram above, which means that $\alpha(v)=v$.
	The arbitrariety of $v$ implies that $\alpha$ must be the identity, but the identity is not necessarily a graph homomorphisms $\tr{\mor{\g}}\to \g$ because $\tr{\mor{\g}}$ has in general more edges than $\g$.
	For the sake of an example, let $\g\coloneqq \minitikzfig{v-structure}$, and note that $\tr{\mor{\g}}$ is a complete DAG (i.e.\ we have the additional edge $A \to B$). 

	The same idea applies when assuming the existence of a natural transformation $(\omega,\h)\to (\omega,\mor{\tr{\h}})$, and therefore $\mor{-}$ and $\tr{-}$ do not admit a unit for the possible adjunction in either direction. 
\end{proof}

\end{document}